\DeclareRobustCommand{\rvdots}{%
  \vbox{
    \baselineskip4\p@\lineskiplimit\z@
    \kern-\p@
    \hbox{.}\hbox{.}\hbox{.}
  }}
\DeclareMathOperator*{\argmin}{argmin\,}
\DeclareMathOperator*{\minimize}{minimize\,}
\DeclareMathOperator*{\st}{subject\,to\,}
\DeclareMathOperator*{\sign}{sign}
\DeclareMathOperator\trace{tr}
\DeclareMathOperator\diag{diag}
\newtheorem{theorem}{Theorem}
\newtheorem{corollary}{Corollary}
\newtheorem{lemma}{Lemma}
\newtheorem{definition}{Definition}
\newtheorem{assumption}{Assumption}
\title{\LARGE \bf
Distributed Estimation of Sparse Inverse Covariances}
\author{Tong Yao and Shreyas Sundaram 
\thanks{Tong Yao and Shreyas Sundaram are with the Elmore Family School of Electrical and Computer Engineering at Purdue University. 
Email: \{yao127,  sundara2
\}@purdue.edu
}  
\thanks{This research was supported by National Science Foundation (NSF) grant CMMI 1638311.
}
}
\begin{document}

\maketitle
\thispagestyle{empty}
\pagestyle{empty}

\begin{abstract}
Learning the relationships between various entities from time-series data is essential in many applications. Gaussian graphical models have been studied to infer these relationships. However, existing algorithms process data in a batch at a central location, limiting their applications in scenarios where data is gathered by different agents. In this paper, we propose a distributed sparse inverse covariance algorithm to learn the network structure (i.e., dependencies among observed entities) in real-time from data collected by distributed agents. Our approach is built on an online graphical alternating minimization algorithm, augmented with a consensus term that allows agents to learn the desired structure cooperatively. We allow the system designer to select the number of communication rounds and optimization steps per data point. We characterize the rate of convergence of our algorithm and provide simulations on synthetic datasets.
\end{abstract}

\section{INTRODUCTION}

Several applications involve analyzing many interacting entities, each generating a large quantity of multi-variate time-series data. Learning the relationships among these entities, especially in real-time, is essential for pattern discovery, prediction, and correlation-based clustering. In distributed scenarios, each agent (observer) can only collect data from a subset of the entities and communicate with its neighbors within a specific communication range to learn the relationships of the entire network in real-time (i.e., online). Such distributed online correlation inference has many applications, for example, in covariance-based and distributed clustering \cite{cov_cluster, dist_infer_sensor}, traffic network prediction and tracking \cite{traffic_lasso, traffic_tracking}, social network inference \cite{dist_infer_social}, and biomedical sensor data inference \cite{bio}. 

To learn the relationships between entities, one approach is to represent each entity as a node in a graph, and each edge defines an interaction between the nodes. Learning the network topology from data is studied extensively in the literature \cite{pavez2018learning,hassan2016topology, materassi2012problem, ayazoglu2011blind}. One method focuses on estimating the non-zero elements of the sparse inverse covariance matrix of these random variables (i.e., nodes) \cite{pavez2018learning, sojoudi2016equivalence} through inferring the edges of the graph by using an $l1$-regularized Gaussian maximum likelihood estimator, assuming the random variables are jointly Gaussian. In this case, an edge between nodes indicates that the corresponding random variables are conditionally dependent, given all the other variables. This is known as the graphical lasso problem \cite{yuan2007model,friedman2008sparse, banerjee2008model}. Various optimization algorithms have been proposed to solve this problem, including coordinate descent \cite{friedman2008sparse}, proximal methods \cite{rolfs2012iterative, hsieh2011sparse}, and alternating minimization methods  \cite{scheinberg2010sparse, dalal2017sparse}.
However, these algorithms have not been studied extensively for distributed and online inference. In \cite{OGAMA}, an online sparse inverse covariance algorithm was proposed, but it is not directly applicable to the distributed setting. 

In this paper, given multi-variate time-series data gathered by each agent in a network, we propose a peer-to-peer distributed algorithm for each agent to estimate the underlying relationships between all the network variables in real-time via distributed and online inverse covariance matrix estimation. 
We provide theoretical guarantees on the convergence of the estimates and characterize the asymptotic rate of convergence. We also demonstrate the performance of our proposed algorithm through simulations. 
\section{PROBLEM FORMULATION} \label{sec: problem formulation}

Consider a set of $p$ random variables $X = \{x_1, x_2, \ldots,x_p\}$ that are jointly Gaussian with zero mean and covariance $S^*$.  These variables are represented by a graph $\mathcal{G} = (\mathcal{V},\mathcal{E})$, where $\mathcal{V} = \{v_1,\ldots,v_p\}$ is the set of nodes, with each node $v_i$ representing a random variable $x_i$. An edge $(v_i,v_j) \in \mathcal{E}$ indicates that variable $x_j$ is conditionally dependent on $x_i$, given all the other random variables. Conversely, if $(v_i,v_j) \notin \mathcal{E}$, $v_j$ is conditionally independent of $v_i$, given all the other variables. This lack of an edge corresponds to a zero-entry in the inverse covariance matrix $(S^*)^{-1}$ (e.g., see \cite{friedman2008sparse,scheinberg2010sparse}).

These relationships (i.e., graph structure) between the variables are unknown {\it a priori}; the goal is to infer the edges of the graph based on samples of the random variables.  Specifically, at each time step $t \in \{1,2,\ldots\}$, the network generates data $X_t = \left[\begin{matrix}x_{1t} & x_{2t} & \cdots & x_{pt}\end{matrix}\right]^T \in \mathbb{R}^p$. We assume each $X_t$ is independently and identically sampled from the underlying Gaussian distribution, i.e., $X_t \sim \mathcal{N}(0,S^*)$.

Consider a group of $n$ agents, $n \in \mathbb{N}$, with a known, unweighted, undirected, and connected communication graph $\mathcal{G}_a = (\mathcal{V}_a,\mathcal{E}_a)$, where $\mathcal{V}_a = \{1,2,\ldots,n\}$ is the set of vertices representing the agents and $\mathcal{E}_a \subseteq \mathcal{V}_a \times \mathcal{V}_a$ is the set of edges. Note that the communication graph $\mathcal{G}_a$ is not to be confused with the correlation graph $\mathcal{G}$. If an edge $({i},{j}) \in \mathcal{E}_a$, agent $i$ and $j$ can communicate with each other. The neighbors of agent $i \in \mathcal{V}_a$ in graph $\mathcal{G}_a$ are represented by the set $N_i = \{j \in \mathcal{V}_a: (i,j) \in \mathcal{E}_a\}$. We let $\mathcal{N}_i$ denote the set of neighbors and agent $i$ itself, i.e., $\mathcal{N}_i = N_i \cup \{i\}$. Each agent $i \in \mathcal{V}_a$ observes a subset of the random variables $X_i \subseteq X$ and collects the corresponding time-series data $X_{i,t} \subseteq X_t$. The time-series data collected by the neighbors of agent $i$ is denoted by $X_{N_i,t} \subseteq X_{t}.$

Given a set of data $\{X_1,X_2,\ldots,X_t\}$ up to time $t$, the centralized maximum likelihood estimation problem is given by
\begin{equation}
  \minimize_{\Omega_t \in \mathcal{S}_{++}^p} -\log\det\Omega_t + \trace(S_t\Omega_t)+ \lambda|\Omega_t|_{l1},
  \label{eqn: GL}
\end{equation}
where the set of $p \times p$ positive definite matrices is denoted by set $\mathcal{S}_{++}^p$, and $S_t = \frac{1}{t}\sum_{i=1}^t X_i X_i^T$ is the sample covariance matrix constructed from all of the data up to time $t$. The terms $-\log\det\Omega_t + \trace(S_t\Omega_t)$ are derived from the Gaussian log-likelihood function \cite{yuan2007model}, where $\trace$ denotes trace, and the term $|\Omega_t|_{l1} = \sum_{i,j=1}^p |\Omega_t(i,j)|$ is the element-wise $l_1$ norm, encouraging sparsity of the solution regulated by the penalty parameter $\lambda \geq 0$. 

For the distributed setting, no agent will have access to the entire sample covariance matrix, since each agent only observes a subset of the variables. Instead, we will have each agent $i \in \mathcal{V}_a$ maintain an estimate $S_{i,t}$ of the sample covariance matrix, which it updates over time based on its own measurements and information received from its neighbors. At each time step $t \in \{1,2,\ldots,\}$, the optimization objective for agent $i \in \mathcal{V}_a$ would then be to find an estimate $\Omega_{i,t}$ of $(S^*)^{-1}$ by solving
\begin{equation}
  \minimize_{\Omega_{i,t} \in \mathcal{S}_{++}^p} -\log\det\Omega_{i,t} + \trace(S_{i,t}\Omega_{i,t})+ \lambda|\Omega_{i,t}|_{l1},
  \label{eqn: GL dist}
\end{equation}
where we have simply updated \eqref{eqn: GL} to show the explicit dependence of the estimated sample covariance matrix $S_{i,t}$ of agent $i$ on the current time $t$.

This paper aims to formulate a \textit{distributed} online algorithm for each agent $i$ to calculate and update an estimate of the inverse covariance matrix $\Omega^*=(S^*)^{-1}$ in real-time. Given a sequence of observations of agent $i$, denoted by $\{X_{i,1},X_{i,2},X_{i,3},\ldots\}$, the objective of $i$ is to perform an online inference of the entire edge set $\mathcal{E}$, through estimating the inverse covariance matrix $\Omega_{i,t}$ based on its local data and by incorporating information from its neighbors. In particular, we want all agents to reach consensus on their estimates asymptotically, i.e., as $t \to \infty$, the estimate of each agent converges asymptotically to the agreement: 
$\lim_{t \to \infty}\Omega_{1,t} = \Omega_{2,t} = \ldots = \Omega_{n,t} = \Omega^*.$

In the rest of the paper, we describe a distributed algorithm to solve this problem, allowing the system designer to specify the number of communications and optimization steps of the algorithm between the arrival of data points. We start by describing the online alternating minimization algorithm for solving problem \eqref{eqn: GL} and subsequently describe the extension of the algorithm to the distributed setting.

\section{Background} \label{sec: backgroud}
We build our approach on an alternating minimization algorithm proposed in \cite{dalal2017sparse} (batch) and \cite{OGAMA} (online) for solving problem \eqref{eqn: GL}; the batch algorithm was shown in \cite{dalal2017sparse} to be significantly faster than other proximal methods such as GISTA \cite{rolfs2012iterative} and QUIC\cite{hsieh2011sparse}; the online algorithm was shown in \cite{OGAMA} to achieve a similar result with fewer iterations in real-time settings. We first describe the details of the centralized algorithm from \cite{dalal2017sparse, OGAMA}, and subsequently discuss our modification to account for the distributed estimation.

The approach in \cite{dalal2017sparse} formulates the primal and dual objective functions for problem \eqref{eqn: GL}. The primal of \eqref{eqn: GL} is:
\begin{align} \label{eqn: primal}
\begin{split}
\minimize_{\Omega_t \in \mathcal{S}^p_{++}, \Phi_t \in \mathcal{S}^p_{++}} &-\log\det\Omega_t + \trace(S_t\Phi_t) + \lambda|\Phi_t|_{l1}\\
\st &\Phi_t = \Omega_t.    
\end{split}
\end{align}
The dual of \eqref{eqn: GL} is given by
\begin{align} \label{eqn: dual}
\begin{split}
    \minimize_{\Gamma_t\in\mathcal{S}_{++}^p} &-\log\det\Gamma_t - p\\
    \st &|\Gamma_{t}(i,j) - S_{t}(i,j)| \leq \lambda \quad \forall i,j ,
\end{split}
\end{align}
where the symmetric positive definite matrix $\Gamma_t$ is the dual variable, and $A(i,j)$ denotes the $(i,j)$-th element of matrix $A$.

Given the sample covariance matrix $S_t$, the alternating minimization follows the iterative sequence of updates, where each iteration is indexed by the variable $k \in \mathbb{N}$:
\begin{align} 
    \Omega^{k+1}_t & = \argmin_{\Omega \in\mathcal{S}^p_{++}} -\log\det \Omega + \trace(\Gamma_t^k\Omega)\label{eqn: omega update},\\
    \Phi^{k+1}_t &= \argmin_{\Phi\in\mathcal{S}^p_{++}} \trace(S_t\Phi) + \lambda|\Phi|_{l1}-\trace(\Gamma^k_t\Phi)  \nonumber \\
    &\qquad\qquad\qquad\qquad +\frac{\zeta_t^k}{2}\|\Omega_t^{k+1} - \Phi\|_F^2 \label{eqn: phi update},\\
    \Gamma^{k+1}_t &= \Gamma_t^{k}+\zeta_t^k(\Omega_t^{k+1} - \Phi_t^{k+1}) \label{eqn: gamma update}.
\end{align}
In the above equations,  $\zeta_t^k$ is a step size, and $\|A\|_F = \sqrt{\trace(AA^T)}$ denotes the Frobenius norm of a given matrix $A$. Taking the derivatives of the expressions for $\Omega^{k+1}$ and $\Phi^{k+1}$ and equating them to 0,  we obtain the closed-form updates \cite{dalal2017sparse}:
\begin{align} 
    \Omega^{k+1}_t &= (\Gamma^{k}_t)^{-1},\label{eqn: omega}\\
    \Phi^{k+1}_t &= \frac{1}{\zeta_t^k}\mathcal{S}_\lambda(\zeta_t^k\Omega^{k+1}_t - S_t + \Gamma^{k}_t) \label{eqn: phi}.
\end{align}
Here, $\mathcal{S}_\lambda(x) = \sign(x)(\max(|x|-\lambda,0))$ is the soft-thresholding operator (applied element-wise to a matrix argument). Following these update rules, $\Omega_t$ is interpreted as an approximately sparse inverse covariance matrix, and $\Phi_t$ is interpreted as the estimate of the sparse inverse covariance matrix. Substituting \eqref{eqn: omega} -- \eqref{eqn: phi} for the variables in \eqref{eqn: gamma update}, and using the clip function $\mathcal{C}_\lambda(x) = \min(\max(x,-\lambda),\lambda)$ with the property $x = \mathcal{S}_\lambda(x) + \mathcal{C}_\lambda(x)$, the dual update \eqref{eqn: gamma update} can be written as:
\begin{align} \label{eqn: dual update}
    \Gamma^{k+1}_t = \mathcal{C}_\lambda(\Gamma^{k}_t -S_t + \zeta_t^k(\Gamma^{k}_t)^{-1})+S_t.
\end{align}
In \cite{OGAMA}, the authors proposed an online algorithm (OGAMA) that iteratively updates the sample covariance matrix at each time step $t$ (at the arrival of data points $X_t$) by 
\[S_t = \frac{1}{t}\left(\left(t-1\right)S_{t-1}+X_tX_t^T\right).\] 
The algorithm allows the system designer to select the total number of optimization iterations $K \in \mathbb{N}$ per new data point, and initializes $\Gamma_{t_0}^K = S_{t_0} + \lambda I_p$, where $I_p \in \mathbb{R}^{p \times p}$ is an identity matrix, with data up to a user-defined $t_0 \in \{1,2,\ldots\}$. The algorithm updates the estimates by iterating through \eqref{eqn: dual update} $K$ times at each $t > t_0$, and updates \eqref{eqn: omega} and \eqref{eqn: phi} at the end of each $t$ when $k = K$.

The step size $\zeta_t^k$ at each iteration $k$ is chosen to guarantee convergence of the estimates $\Gamma_t^k$ to their desired quantities. It was shown in \cite{OGAMA} that $\forall t\geq t_0$ and $\forall k \in \{1,2,\ldots,K\}$, the step size can be set as a constant $\zeta = \zeta_t^k < a^2$ for some constant $a$. 

We provide the pseudo-code implementation of the OGAMA algorithm in Algorithm \ref{algo: OGAMA}.
Note that we modified the variables to demonstrate the computation at each agent $i$. In the centralized OGAMA, the input $S_{i,t}^w = S_t$ and the output $\Gamma_{i,t}^k = \Gamma_{t}^k$, $\forall i\in\mathcal{V}_a$ and $\forall w \in \{1,2,\ldots,W\}$, where $w$ denotes the communication iterations which we discuss in the next section.
\begin{algorithm}
\SetKwInOut{Parameter}{Parameter}
\KwIn{$t, k,S_{i,t}^w, \Gamma_{i,t}^{k-1}, \zeta_{i,t}^{k-1}$}
\KwResult{$\Gamma_{i,t}^k, \zeta_{i,t}^{k}$}
\Parameter{$K$, $\lambda$, $t_0$}
\BlankLine
\If{$t=t_0$}{
$\Gamma_{i,t}^k = \Gamma_{i,t_0}^K = S_{i,t_0}^w+\lambda I_p$\\
}
\ElseIf{$t>t_0$}{
$\Gamma^{k}_{i,t} = \mathcal{C}_\lambda(\Gamma^{k-1}_{i,t}-S_{i,t}^w+\zeta_{i,t}^{k-1}(\Gamma_{i,t}^{k-1})^{-1})+S_{i,t}^w$
}
Choose $\zeta_{i,t}^k \in (0, (\lambda_{\min}(\Gamma_{i,t}^k))^2)$
\caption{Online Graphical Alternating Minimization Algorithm (OGAMA)}
\label{algo: OGAMA}
\end{algorithm}

\section{Communication Protocols and Distributed Algorithm} \label{sec: proposed algorithm}
In this section, we first discuss our assumptions on the problem. We then propose the communication protocols in Algorithm \ref{algo: COM} and the high level flow of function executions for DGAMA in Algorithm \ref{algo: DGAMA}.

\begin{assumption}\label{aspt: know neighbor}
Each agent $i \in \mathcal{V}_a$ knows its set of measured variables $X_i$, represented by a diagonal matrix $V_i \in \mathbb{R}^{p \times p}$. The $j$-th diagonal of $V_i(j,j) = 1$ if $x_j \in X_i$ and $V_{i}(j,j) = 0$ otherwise. Also, each agent $i$ knows its neighbors ${N_i}$ and the nodes measured by its neighbors $V_{N_i}$. 
\end{assumption}
\begin{assumption}\label{aspt: sync clock}
The agents have a synchronized clock such that all the agents have the same value of $t$.
\end{assumption}

To tailor the algorithm for the real-time multi-agent setting, we allow the system designer to select the maximum number of communication iterations $W$ and the maximum number of optimization iterations $K$ per time-step. We discuss the range of $W$ and $K$ in Sec. \ref{sec: analysis}. The selection of $W$ and $K$ is also based on the number of variables $p$, the hardware capabilities, the data arrival rate, etc. When communication (or computation) is expensive or slow, $W$ (or $K$) is set to be small; on the other hand, if communication (or computation) is cheap or fast, $W$ (or $K$) can be set to be large. We let $W$ and $K$ remain the same for each time step for simplicity. We index each communication iteration by the variable $w \in \{1,2,\ldots,W\}$ and index the optimization iteration by the variable $k \in \{1,2,\ldots,K\}$. In particular, we denote agent $i$'s estimate of $S_t$ after $w$ rounds of communication at time-step $t$ by $S_{i,t}^w$.

We initialize each agent's estimated sample covariance matrix as $S_{i, 0} = \textbf{0} \in \mathbb{R}^{p\times p}$. At each time step $t$, the agents communicate with their neighbors to update $S_{i,t}^w$. The pseudo-code implementation to communicate and update $S_{i,t}^w$ is in Algorithm \ref{algo: COM}. 

At the beginning of time $t$, when $w = 1$, upon the arrival of the observation data $X_{i,t}$, each agent $i$ performs a round of communication by sending its $X_{i,t}$ and receiving $X_{N_i,t}$ from its neighbors. Based on this communication data, we introduce the concept of an \textit{observable} node in the following definition. 
\begin{definition}
We say node $x_j$ is \textit{observable} by agent $i$ if $x_j \in  X_{\mathcal{N}_i} = X_{i}\cup X_{N_i}$, i.e., the node can be directly observed by agent $i$ or indirectly observed through data from ${N_i}$. Similarly, we say the $(l,m)$-th entry of sample covariance matrix $S_{i}(l,m)$ is \textit{observable} by agent $i$ if $x_l$ and $x_m$ are both observable by agent $i$.
\end{definition}

Observations of the pairwise relationships are required to solve our inverse covariance estimation problem; thus, each pair of nodes must be observed by at least one agent. With that in mind, we proposed the following definition. 
\begin{definition}\label{def: jointobserv}
The pairwise relationships of all nodes (i.e., random variables) are \textit{jointly observable} if \[\cup_i \{X_{\mathcal{N}_i} \times X_{\mathcal{N}_i} \} = X \times X.\]
\end{definition} 
Joint observability ensures that for each pair of nodes, there is at least one agent that can observe and compute the sample covariance between these nodes. A demonstration of Def. \ref{def: jointobserv} is presented in Fig. \ref{fig: demo}.
\begin{figure}
    \centering
    \includegraphics[width = 0.8\linewidth]{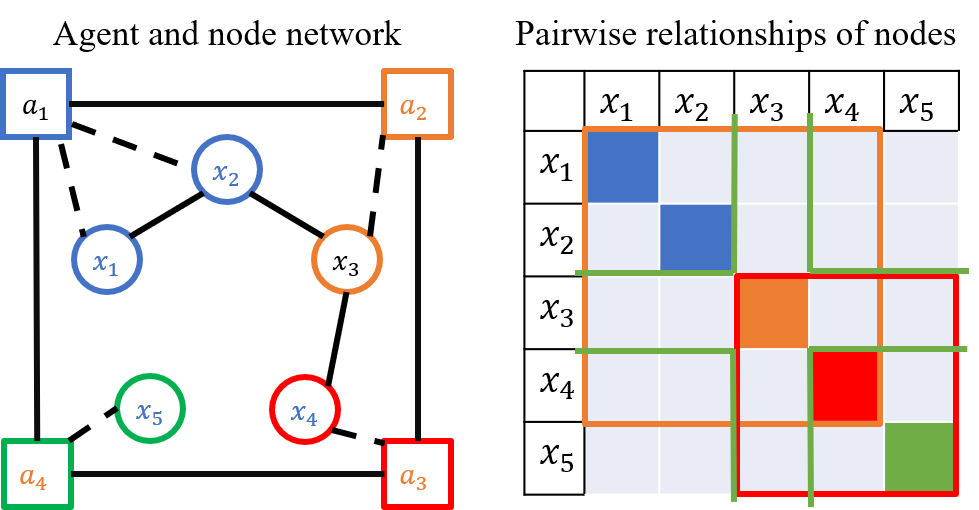}
    \caption{Four agents $(a_1, \ldots, a_4)$ observing five nodes $(x_1, \ldots, x_5)$. The solid lines connecting the agents indicate communication; the solid lines connecting the nodes indicate correlations; the dashed lines between the agents and the nodes indicate that the agents directly observe nodes. In this example, the sample covariance matrix is jointly observable, i.e., for each pair of nodes, there is an agent that can observe the pairwise relationship. The entries of the sample covariance matrix observed by agent $1$ are omitted in the presentation.}
    \label{fig: demo}
\end{figure}

After the communications among the agents, each agent computes the sample covariance matrix $S_{i,t}^w$.
If $w= 1$, with data $X_{i,t}$ and $X_{N_i,t}$, each agent creates a data vector $\chi_{i,t} \in \mathbb{R}^{p}$. If a variable $x_{j}$ is observable by agent $i$ at time $t$, the corresponding entry $j$ of $\chi_{i,t}(j) =  x_{jt}$, and $\chi_{i,t}(j) = 0$ otherwise. With $\chi_{i,t}$, each agent computes $S_{i,t}^1$ with the following updates:
\begin{align}
        S_{il,t} &= \frac{1}{t}\left(\left(t-1\right)V_iS_{i,t-1}^W V_i+\chi_{i,t}\chi_{i,t}^T\right)
        \label{eqn: local}\\
        S_{i,t}^1 &= S_{il,t} + \frac{\sum_{j\in \mathcal{N}_i} S_{j,t-1}^{W} -V_i \sum_{j\in \mathcal{N}_i} S_{j,t-1}^{W} V_i}{|\mathcal{N}_i|}.
        \label{eqn: updateS}
\end{align}

Each agent updates its local sample covariance matrix, denoted by $S_{il,t}$, computed recursively following \eqref{eqn: local}, consisting of data up to time step $t$.
The outer product $\chi_{i,t}\chi_{i,t}^T$ creates a block matrix, such that, if $x_l$ and $x_m$ are both observable by agent $i$, the $(l,m)$-th entry $ \chi_{i,t}\chi_{i,t}^T(l,m) = x_{lt}x_{mt}$, and 0 otherwise. Similarly, the term $V_i S_{i,t-1}^{W} V_i$ creates a block matrix such that $ V_i S_{i,t-1}^{W} V_i(l,m) = S_{i,t-1}^{W}(l,m)$ if $x_l$ and $x_m$ are both observable by $i$, and 0 otherwise. 
In \eqref{eqn: updateS}, the sample covariance matrix $S_{i,t}^1$ of $i$ is created by combining the observable entries from the block matrix $S_{il,t}$, with the unobservable entries from $\sum_{j \in \mathcal{N}_i} S_{j,t-1}^{W}/|\mathcal{N}_i|$.
If the entries of $S_{i,t}^{w}$ are observable by $i$, agent $i$ uses its information exclusively; if the entries of $S_{i,t}^{w}$ are not observable by $i$, agent $i$ uses the information from its neighbors. Agent $i$ then sends its neighbors the new $S_{i,t}^{1}$ and receives $S_{N_i,t}^1$ from its neighbors.

When $w > 1$, the update of $S_{i,t}^w$ follows
\begin{multline}
    S_{i,t}^w = V_iS_{i,t}^{w-1}V_i + \frac{\sum_{j\in \mathcal{N}_i} S_{j,t}^{w-1} - V_i \sum_{j\in \mathcal{N}_i} S_{j,t}^{w-1} V_i}{|\mathcal{N}_i|}, 
    \label{eqn: updateS w>1}
\end{multline}
where $V_iS_{i,t}^{w-1}V_i$ contains the observable entries of agent $i$ from the previous step (0 otherwise). Then agent $i$ computes $S_{i,t}^w$ by integrating the observable entries from the previous iteration $S_{i,t}^{w-1}$ and unobservable entries given by $\sum_{j\in \mathcal{N}_i} S_{j,t}^{w-1}/|\mathcal{N}_i|$. Agent $i$ then sends its neighbors the new $S_{i,t}^{w}$ and receives $S_{N_i,t}^w$ from its neighbors.

\begin{algorithm}
\SetKwInOut{Parameter}{Parameter}
\KwIn{$t, w, S_{\mathcal{N}_i,t}^{w-1}$}
\KwResult{$S_{\mathcal{N}_i,t}^{w} = S_{i,t}^w \cup S_{N_i,t}^w$}
\Parameter{$V_i, N_i$}
\BlankLine
\If{$w = 1$}{
    Send data $X_{i,t}$ and receive neighbor data $X_{N_i,t}$ \\
    $X_{\mathcal{N}_i,t} = X_{i,t} \cup X_{N_i,t}$\\
    Create $\chi_{i,t}$ with $X_{\mathcal{N}_i,t}$ as described above \eqref{eqn: local}\\
    Update $S_{il,t}$ as per \eqref{eqn: local}\\
    Update $S_{i,t}^1$ as per \eqref{eqn: updateS}
}
\Else{
Update $S_{i,t}^w$ as per \eqref{eqn: updateS w>1}
}
Send $S_{i,t}^w$ and receive $S_{N_i,t}^w$ 
\caption{Communication for Agent $i \in \mathcal{V}_a$ (COM)}
\label{algo: COM}
\end{algorithm}

In Algorithm \ref{algo: DGAMA}, we present the flow of function executions. Each agent $i$ performs $W$ rounds of communication and obtains $S_{i,t}^W$ at each time step $t$; subsequently, $i$ updates its local estimates $\Gamma_{i,t}^K$ after performing $K$ optimization iterations.

\begin{algorithm}
\KwIn{$W, K, \lambda, t_0$}
Initialize $S_{j,1}^0 = S_{j,0}^W = \textbf{0} \in \mathbb{R}^{p \times p} \; \forall j \in \mathcal{N}_i$\\

\For{$t\in\{1,2,3,\ldots\}$}{
    \For{$w = 1:W$}{
    
        $S_{\mathcal{N}_i,t}^w =$ COM($t, w, S_{\mathcal{N}_i,t}^{w-1}$) $\;$ (consensus) 
    }
    Extract $S_{i,t}^W$ from $S_{\mathcal{N}_i,t}^W$\\
    \If{$t = t_0$}{$\Gamma_{i,t}^K, \zeta_{i,t}^{K}$ = OGAMA($t, K, S_{i,t}^W, null, null$)}
    \If{$t > t_0$}{
    \For{$k = 1:K$}{
    \If{$k = 1$}{$\Gamma_{i,t}^0 = \Gamma_{i,t-1}^K$}
        $\Gamma_{i,t}^k, \zeta_{i,t}^{k}$ = OGAMA($t, k,S_{i,t}^W, \Gamma_{i,t}^{k-1}, \zeta_{i,t}^{k-1}$)
    }
    Update the variables $\Omega_t^K$, $\Phi_t^K$ as per \eqref{eqn: omega}, \eqref{eqn: phi}
    }
}
\caption{Distributed Graphical Alternating Minimization Algorithm for Agent $i \in \mathcal{V}_a$ (DGAMA)}
\label{algo: DGAMA}
\end{algorithm}

\section{Convergence Analysis} \label{sec: analysis}
In this section, we show that the dual variable estimate $\Gamma_{i,t}^K$ of each agent $i$ converges to $\Gamma^*$, where $\Gamma^* = (\Omega^*)^{-1}$ is the centralized fixed point solution of \eqref{eqn: dual update} and $\Omega^*$ is the centralized optimal solution of \eqref{eqn: GL}, given the ground truth covariance matrix $S^*$ \cite{dalal2017sparse}.
The proofs of our results can be found in the Appendix. 

\subsection{Error Bounds of Dual Variables}

To simplify the analysis, we assume that $\forall i \in \mathcal{V}_a, t \geq t_0$, and $k\in\{1,2,\ldots,K\}$, we can select a constant step size $\zeta = \zeta_{i,t}^k <a^2$, such that $0\prec aI_p \preceq \Gamma_{i,t}^k \preceq b$ and $aI_p \preceq \Gamma^* \preceq bI_p$. In Appendix \ref{subsec: theorem 1}, we show that constants $a$ and $b$ exist.

We provide the following results bounding the difference between the estimate $\Gamma_{i,t}^K$ of each agent and the centralized optimal solution $\Gamma^*$, after $K$ iterations at each time step $t$ (given the new data point $X_{i,t}$), denoted by $\|\Gamma_{i,t}^K - \Gamma^*\|_F.$

\begin{theorem}
Assume $\forall i \in \mathcal{V}_a$, $\forall t \geq t_0$, and $\forall k \in \{1, 2, \ldots, K\}$, the iterates $\Gamma_{i,t}^k$ satisfy $a I_p\preceq \Gamma_{i,t}^k \preceq b I_p$,  and $aI_p \preceq \Gamma^* \preceq bI_p$, for some fixed constants $0<a<b$, and $\zeta_{i,t}^k = \zeta$.
After $W$ communication and $K$ optimization iterations of Algorithm \ref{algo: DGAMA}, at time step $t$, we have the following bound for the dual variable:
\begin{multline}\label{eqn: algo 3 bound}
    \|\Gamma_{i,t}^K - \Gamma^*\|_F \leq \beta^{K(t-t_0)} \|\Gamma_{i,t_0}^K - \Gamma^*\|_F\\
    +2\sum_{m=1}^K \beta^{K-m}\sum_{l=t_0+1}^{t} \beta^{K(t-l)}\|S_{i,l}^W-S^*\|_F,
\end{multline}where $\beta = \max\left\{|1-\frac{\zeta}{a^2}|,|1-\frac{\zeta}{b^2}|\right\}.$
\label{thm: conv of 1}
\end{theorem}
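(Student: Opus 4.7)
The plan is to establish a per-iteration contraction for the dual update \eqref{eqn: dual update} and then unroll it, first over the $K$ optimization iterations within a time step and then over the $t-t_0$ time steps since initialization. First I would write the agent's update and the fixed-point equation side by side:
\begin{align*}
\Gamma_{i,t}^{k} &= \mathcal{C}_\lambda\bigl(\Gamma_{i,t}^{k-1} - S_{i,t}^W + \zeta(\Gamma_{i,t}^{k-1})^{-1}\bigr) + S_{i,t}^W,\\
\Gamma^{*} &= \mathcal{C}_\lambda\bigl(\Gamma^{*} - S^{*} + \zeta(\Gamma^{*})^{-1}\bigr) + S^{*}.
\end{align*}
Subtracting and using the fact that $\mathcal{C}_\lambda$ is $1$-Lipschitz (applied element-wise), together with the triangle inequality applied to the two occurrences of $S_{i,t}^W - S^*$, would produce
\[
\|\Gamma_{i,t}^{k}-\Gamma^{*}\|_F \leq \bigl\| g(\Gamma_{i,t}^{k-1}) - g(\Gamma^{*}) \bigr\|_F + 2\,\|S_{i,t}^W - S^{*}\|_F,
\]
where $g(A) := A + \zeta A^{-1}$.

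The central step is to show that on the spectral band $aI_p \preceq A \preceq bI_p$, the map $g$ is a contraction with constant $\beta = \max\{|1-\zeta/a^2|,|1-\zeta/b^2|\}$. I would compute the Fréchet derivative $Dg(A)[H] = H - \zeta A^{-1} H A^{-1}$ and note that, in the orthonormal basis that diagonalizes $A$ with eigenvalues $\lambda_1,\ldots,\lambda_p \in [a,b]$, the linear operator $Dg(A)$ acts coordinate-wise as $H_{ij} \mapsto (1 - \zeta/(\lambda_i\lambda_j))H_{ij}$, so its Frobenius operator norm is $\max_{i,j}|1-\zeta/(\lambda_i\lambda_j)| \leq \beta$. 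Because the assumption $\zeta < a^2$ guarantees $\beta < 1$ and the segment $A_s := \Gamma^{*} + s(\Gamma_{i,t}^{k-1}-\Gamma^{*})$ stays inside the band $[aI_p,bI_p]$ for $s\in[0,1]$, the line-integral identity $g(\Gamma_{i,t}^{k-1}) - g(\Gamma^{*}) = \int_0^1 Dg(A_s)[\Gamma_{i,t}^{k-1}-\Gamma^{*}]\,ds$ yields $\|g(\Gamma_{i,t}^{k-1}) - g(\Gamma^{*})\|_F \leq \beta\,\|\Gamma_{i,t}^{k-1}-\Gamma^{*}\|_F$. Plugging this back gives the one-step recursion
\[
\|\Gamma_{i,t}^{k}-\Gamma^{*}\|_F \leq \beta\,\|\Gamma_{i,t}^{k-1}-\Gamma^{*}\|_F + 2\,\|S_{i,t}^W - S^{*}\|_F.
\]

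Next I would unroll this recursion $K$ times within a fixed time step, using the geometric-sum identity $\sum_{m=1}^{K}\beta^{K-m} = \sum_{j=0}^{K-1}\beta^{j}$, to obtain
\[
\|\Gamma_{i,t}^{K}-\Gamma^{*}\|_F \leq \beta^{K}\,\|\Gamma_{i,t}^{0}-\Gamma^{*}\|_F + 2\Bigl(\sum_{m=1}^{K}\beta^{K-m}\Bigr)\|S_{i,t}^W - S^{*}\|_F.
\]
Since Algorithm \ref{algo: DGAMA} warm-starts each time step with $\Gamma_{i,t}^{0} = \Gamma_{i,t-1}^{K}$, I would cascade the same inequality over $l = t_0+1,\ldots,t$, which produces the geometric weights $\beta^{K(t-l)}$ in front of each covariance error term and the factor $\beta^{K(t-t_0)}$ in front of the initial error $\|\Gamma_{i,t_0}^{K}-\Gamma^{*}\|_F$, matching \eqref{eqn: algo 3 bound} exactly.

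The hard part is the matrix contraction bound for $g$: one cannot naively invoke the scalar Lipschitz constant of $x\mapsto x + \zeta/x$ because $\Gamma_{i,t}^{k-1}$ and $\Gamma^{*}$ need not commute, so the argument must go through the Fréchet derivative and a path-integral, and it relies crucially on keeping every iterate inside the spectral band $[aI_p,bI_p]$. The existence of such $a,b$ is where the separate appendix argument (referenced just before the theorem) is needed; the rest of the proof is then a clean geometric-series unrolling.
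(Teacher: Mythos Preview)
Your proposal is correct and follows essentially the same route as the paper: a one-step contraction inequality (the paper's Lemma~\ref{lemma:k}) obtained from the nonexpansiveness of $\mathcal{C}_\lambda$, the triangle inequality, and the Lipschitz bound for $A\mapsto A+\zeta A^{-1}$, followed by the same two-level geometric unrolling over $k$ and then over $t$. The only cosmetic difference is that the paper cites the contraction constant $\beta$ for $g(A)=A+\zeta A^{-1}$ from \cite{rolfs2012iterative}, whereas you derive it explicitly via the Fr\'echet derivative and a path integral---this is exactly the argument underlying that cited lemma, so the two proofs coincide.
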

\subsection{Convergence of Sample Covariance Matrix}
To analyze the convergence of the estimates $\Gamma_{i,t}^k$ of the algorithms, we show that the sample covariance matrix $S_{i,t}^w$ of each agent converges to the true sample covariance matrix $S^*$. 
We consider each entry $(l,m)$ of the sample covariance matrix $S_t$. We let $L \subset \mathcal{V}_a$ denote the subset of agents that are observing $S_t(l,m)$ and call these agents \textit{leaders}. The other agents who cannot observe $S_t(l,m)$ are called \textit{followers} and denoted by $F =  \mathcal{V}_a\setminus L$. For our discussion below, we focus on a specific pair of $(l, m)$, and without loss of generality, we let the first $|L|$ agents $\{1,2,\ldots,{|L|}\}$ be leaders and let the last $|F|$ agents $\{{|L|+1}, {|L|+2},\ldots,{n}\}$ be followers. The same analysis holds for all $l, m$.

For entry $(l,m)$, if $i \in L$, then $S_{i,t}^w(l,m) = S_{t}(l,m), \forall t$ and $\forall w = \{1,2,\ldots,W\}$, where $S_{t}$ is the sample covariance matrix computed given data $\{X_1,X_2,\ldots, X_t\}$ up to time $t$. If $S_t(l,m)$ is not observed by agent $i$, the update of unobserved entries in \eqref{eqn: updateS} and \eqref{eqn: updateS w>1} can be written as
\begin{equation}
S_{i,t}^w (l,m)= \frac{1}{|\mathcal{N}_i|}\sum_{j\in \mathcal{N}_i} S_{j,t}^{w-1}(l,m) \; \forall i\in F.    
\end{equation}
We can write this update for the follower agents in the matrix form,
\begin{align*}
S_{F,t}^w(l,m) &= \Tilde{D}^{-1}(\Tilde{I}+\Tilde{A})S_{t}^{w-1}(l,m) \\
&\triangleq P_{F,lm} S_{t}^{w-1}(l,m),
\end{align*}
where matrix $\Tilde{D} = \diag(|\mathcal{N}_{|L|+1}|, |\mathcal{N}_{|L|+2}|,\ldots,|\mathcal{N}_{n}|)$, $\Tilde{I}$ is the last $|F|$ rows of identity matrix $I_n$, $\Tilde{A}$ is the last $|F|$ rows of an unweighted adjacency matrix, and $S_{t}^w(l,m) = [S_{1,t}^w(l,m),S_{2,t}^w(l,m),\ldots,S_{n,t}^w(l,m)]^T$.

Thus, for each $(l,m)$ entry of $S_t$, the updates of Algorithm \ref{algo: COM} can be written in the form
\[\begin{bmatrix} S_{L,t}^w(l,m) \\ S_{F,t}^w(l,m) \end{bmatrix} = 
\begin{bmatrix} P_{LL,lm} & P_{LF,lm} \\ P_{FL,lm} & P_{FF,lm}\end{bmatrix}
\begin{bmatrix} S_{L,t}^{w-1}(l,m) \\ S_{F,t}^{w-1}(l,m)\end{bmatrix}.\] 
Since the leader agents keep their estimations constant, $P_{LL,lm} = I_{|L|}$ and $P_{LF,lm} = \textbf{0}$. 
The matrix $P_{FL,lm}$ is the first $|L|$ columns and $P_{FF,lm}$ is the last $|F|$ columns of $P_{F,lm}$. It was shown that the convergence rate of the communication network is dictated by the largest eigenvalue of $P_{FF,lm}$ \cite{smalleigstubborn}. We provide the following lemma using $P_{FF,lm}$.

\begin{lemma} If all variables are jointly observable, the sample covariance matrix of each agent $S_{i,t}^w$ converges to $S_t$ as $w \to \infty$, with a rate given by
\begin{equation} 
    \|S_{i,t}^w - S_t\|_F \leq c\sigma^w \|S_{i,t}^0-S_t\|_F,
    \label{eqn: S consensus}
\end{equation}where $\sigma = \max_{lm}\lambda_{\max}(P_{FF,lm})$ denotes the largest eigenvalue of $P_{FF,lm}$ for all $(l,m)$, $c \geq 1$ is some constant related to the degrees of $\mathcal{G}_a$ \cite{stubbornagent}, and $S_{i,t}^0 = S_{i,t-1}^w$ denotes $i$'s initial estimate of the sample covariance matrix at time $t$. 
\label{lemma: consensus}
\end{lemma}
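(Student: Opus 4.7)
The plan is to fix an arbitrary entry $(l,m)$ of the covariance matrix, derive a linear recursion for the agents' errors on that entry in terms of $P_{FF,lm}$, show that its spectral radius is strictly less than one, and then aggregate over all entries to recover the Frobenius bound. The key structural observation is that each leader for $(l,m)$ keeps its $(l,m)$-estimate pinned at $S_t(l,m)$ for all $w$ by construction, so only the followers contribute error and their dynamics reduce to iterating the principal submatrix $P_{FF,lm}$.

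First I would define the follower error vector
\[e_F^w(l,m) := S_{F,t}^w(l,m) - S_t(l,m)\mathbf{1}_{|F|}.\]
Substituting $S_{L,t}^w(l,m) = S_t(l,m)\mathbf{1}_{|L|}$ into the block recursion displayed just before the lemma, and using that $P_{F,lm} = \tilde D^{-1}(\tilde I + \tilde A)$ is row-stochastic (so that $P_{FL,lm}\mathbf{1}_{|L|} + P_{FF,lm}\mathbf{1}_{|F|} = \mathbf{1}_{|F|}$), the affine contribution cancels and the recursion collapses to $e_F^w(l,m) = P_{FF,lm}\, e_F^{w-1}(l,m)$. Iterating yields $e_F^w(l,m) = P_{FF,lm}^w\, e_F^0(l,m)$.

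Second, I would establish that $\sigma_{lm} := \lambda_{\max}(P_{FF,lm}) < 1$. Joint observability guarantees $|L| \ge 1$, and the connectivity of $\mathcal{G}_a$ ensures that every follower is reachable from some leader by a path in the communication graph. Along any such path, the follower at its end has a strictly positive entry in its row of $P_{FL,lm}$, so the corresponding row-sum of $P_{FF,lm}$ is strictly less than one. A Perron-Frobenius / stubborn-agent argument (as in the references cited in the excerpt) then yields $\sigma_{lm} < 1$ together with an operator-norm bound $\|P_{FF,lm}^w\|_2 \le c_{lm}\,\sigma_{lm}^w$, where $c_{lm}$ depends on the Jordan structure of $P_{FF,lm}$ and is controlled by the degrees of $\mathcal{G}_a$.

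Third, I would aggregate across entries. Setting $\sigma := \max_{(l,m)} \sigma_{lm}$ and $c := \max_{(l,m)} c_{lm}$, and using that leader entries contribute zero error at every $w$, I would sum the squared per-entry errors over all agents $i$ and all pairs $(l,m)$ and take square roots to obtain
\[\|S_{i,t}^w - S_t\|_F \;\le\; c\,\sigma^w \,\|S_{i,t}^0 - S_t\|_F.\]
I expect the main obstacle to be the spectral-radius claim $\sigma_{lm}<1$: it requires carefully verifying that in every connected component of the follower-induced subgraph there is at least one edge leading to a leader, which in turn relies on both joint observability (to guarantee leaders exist for each pair) and connectivity of $\mathcal{G}_a$ (to propagate that influence). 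Once this is in hand, the remainder of the argument is a direct consequence of row-stochasticity and the linearity of the consensus update.
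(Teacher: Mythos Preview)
Your proposal is correct and follows essentially the same approach as the paper: fix an entry $(l,m)$, observe that leaders are pinned at $S_t(l,m)$ so only follower errors evolve, reduce the follower dynamics to iterating $P_{FF,lm}$, invoke joint observability plus connectivity of $\mathcal{G}_a$ to get $\lambda_{\max}(P_{FF,lm})<1$, and then take the maximum over entries. The paper's own proof is in fact considerably terser than yours---it simply notes that leaders exist by joint observability, that they hold $S_t(l,m)$ fixed, and then cites the stubborn-agent consensus literature for the geometric rate $\lambda_{\max}(P_{FF,lm})$; your write-up makes explicit the row-stochasticity cancellation and the Perron--Frobenius reasoning that the paper delegates to its references.
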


Lemma \ref{lemma: consensus} shows that the states of all agents converge at a rate given by the largest eigenvalue of $P_{FF,lm}$. In this work, we will omit the discussion on the bound of $\sigma$ and refer the readers to \cite{opinion_stubborn, stubbornagent, smalleigstubborn}. Since $0\leq \sigma <1$ ($\mathcal{G}_a$ is connected), all agents reach consensus when the rounds of communication $W \to \infty$, i.e., $\lim_{W\to\infty}S_{i,t}^W = S_{t}, \forall i \in \mathcal{V}_a$ . To guarantee the convergence of DGAMA, we let $W$ be large enough such that each agent's estimate of $S_t$ after $W$ rounds of communication $S_{i,t}^W$ is better than the initial estimate $S_{i,t}^0$, i.e., $c\sigma^W < 1$. Next, we show that all agents reach consensus for such $W$ as the number of time steps $t \to \infty$. 
\begin{theorem}
\label{thm: S conv}
If all variables are jointly observable, the estimates $S_{i,t}^W$ of sample covariance matrices from Algorithm \ref{algo: COM} converge to the ground truth sample covariance matrix $S^*$ almost surely as $t \to \infty$, for all $i \in  \mathcal{V}_a$ and for all $W \in \mathbb{N}$ such that $c\sigma^W < 1$: \[\lim_{t\to \infty} \|S_{i,t}^W-S^*\|_F = 0 \; a.s.\]
\end{theorem}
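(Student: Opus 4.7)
The plan is to decompose the error via the triangle inequality into a ``statistical'' component that decays by the strong law of large numbers and a ``consensus'' component that decays geometrically at rate $\rho := c\sigma^W < 1$. Writing
\[
\|S_{i,t}^W - S^*\|_F \le \|S_{i,t}^W - S_t\|_F + \|S_t - S^*\|_F,
\]
the second term vanishes almost surely as $t \to \infty$: since $X_\tau X_\tau^T$ are i.i.d.\ with $\mathbb{E}[X_\tau X_\tau^T] = S^*$ and each entry has finite variance (as a product of jointly Gaussian variables), the entrywise strong law of large numbers gives $S_t \to S^*$ a.s., and hence Frobenius convergence since the matrices are of fixed finite size. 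It therefore suffices to prove that $a_t := \|S_{i,t}^W - S_t\|_F$ tends to $0$ almost surely.

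To control $a_t$, I will iterate Lemma~\ref{lemma: consensus} across time steps. Because $S_{i,t}^0 = S_{i,t-1}^W$ is the agent's initialization at time $t$, Lemma~\ref{lemma: consensus} yields $a_t \le \rho \|S_{i,t-1}^W - S_t\|_F$, and a further triangle inequality gives $\|S_{i,t-1}^W - S_t\|_F \le a_{t-1} + d_t$ with $d_t := \|S_t - S_{t-1}\|_F$. Hence
\[
a_t \le \rho\,a_{t-1} + \rho\,d_t, \qquad \rho = c\sigma^W \in (0,1).
\]
Next I argue $d_t \to 0$ a.s. From the recursion $S_t = \tfrac{t-1}{t}S_{t-1} + \tfrac{1}{t}X_tX_t^T$ we get $d_t \le \tfrac{1}{t}\bigl(\|X_t\|^2 + \|S_{t-1}\|_F\bigr)$. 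The term $\|S_{t-1}\|_F/t$ vanishes since $S_{t-1}$ is almost surely bounded (it converges to $S^*$). For $\|X_t\|^2/t$, I would use sub-exponential tail bounds for quadratic forms in Gaussian variables together with Borel--Cantelli to conclude $\|X_t\|^2 = O(\log t)$ a.s., so $\|X_t\|^2/t \to 0$ a.s.; alternatively, one can simply note that a.s.\ convergence of $S_t$ to a finite limit forces successive differences to vanish.

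Finally, I will unroll the scalar recursion to obtain
\[
a_t \le \rho^{t-t_0}a_{t_0} + \rho \sum_{s=t_0+1}^{t} \rho^{\,t-s}\,d_s.
\]
On the almost-sure event that $d_s \to 0$, given any $\epsilon > 0$ choose $T$ with $d_s < \epsilon$ for all $s \ge T$; splitting the sum at $T$ bounds the tail by $\rho\epsilon/(1-\rho)$ and the head by $\rho^{\,t-T}$ times a finite constant, which vanishes as $t \to \infty$. Hence $\limsup_t a_t \le \rho\epsilon/(1-\rho)$ a.s., and letting $\epsilon \downarrow 0$ yields $a_t \to 0$ a.s. Combined with $\|S_t - S^*\|_F \to 0$ a.s., the claim follows. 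The main obstacle is the probabilistic step establishing $d_t \to 0$ almost surely; the rest is a deterministic contraction argument. A secondary care point is that Lemma~\ref{lemma: consensus} is stated treating leader entries as pinned to $S_t$, so the bound applied to $\|S_{i,t}^W - S_t\|_F$ should be read as accounting for the ``incorporate new data'' step at $w=1$, which is why the stacked recursion on $a_t$ (rather than a one-shot bound in $W$) is the right object to analyze.
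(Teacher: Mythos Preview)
Your proposal is correct and follows essentially the same approach as the paper: both decompose via the triangle inequality into $\|S_{i,t}^W - S_t\|_F + \|S_t - S^*\|_F$, iterate Lemma~\ref{lemma: consensus} across time steps using $S_{i,t}^0 = S_{i,t-1}^W$ and $d_t = \|S_t - S_{t-1}\|_F$ to obtain the geometric convolution $\sum_j (c\sigma^W)^{t+1-j}\|S_j - S_{j-1}\|_F$, and conclude by noting $d_t \to 0$ a.s.\ (since $S_t \to S^*$) together with the fact that a geometric convolution against a vanishing sequence tends to zero. The only cosmetic difference is that the paper cites this last fact as a lemma from \cite{nedic2010constrained}, whereas you prove it inline via the $\epsilon$-splitting argument; your extra Borel--Cantelli justification for $d_t \to 0$ is unnecessary, as the simpler ``convergent sequences have vanishing increments'' observation you mention suffices and is what the paper uses.
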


\subsection{Convergence of Dual Variables}
In this subsection, we show the convergence and the rate of convergence of the dual variables $\Gamma_{i,t}^K$ as $t \to \infty$.

\begin{corollary}
\label{coro: 1}
Assume there exist constants $0 < a < b$ such that, $\forall t\ge t_0$, $\forall k \in \{1, 2, \ldots, K\}$, and $\forall i \in  \mathcal{V}_a$, the quantities $\Gamma_{i,t}^k$ satisfy $aI\preceq \Gamma_{i,t}^k \preceq bI$, $aI_p \preceq \Gamma^* \preceq bI_p$, and $\zeta_{i,t}^k = \zeta < a^2$. Then, for all $W \in \mathbb{N}$ such that $c\sigma^W < 1$, as the number of data points $t \to \infty$, the result $\Gamma_{i,t}^K$ converges to the optimal solution $\Gamma^*$ almost surely: \[\lim_{t\to \infty}\|\Gamma_{i,t}^K - \Gamma^*\| = 0 \; a.s.\]
\end{corollary}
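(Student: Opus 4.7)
The plan is to directly apply Theorem \ref{thm: conv of 1} and show that each term on the right-hand side of \eqref{eqn: algo 3 bound} vanishes almost surely as $t \to \infty$. Under the hypothesis $\zeta < a^2 \leq b^2$, both $|1-\zeta/a^2|$ and $|1-\zeta/b^2|$ lie in $[0,1)$, so $\beta \in [0,1)$; moreover $\sum_{m=1}^K \beta^{K-m}$ is a finite geometric-sum constant depending only on $\beta$ and $K$. The first term $\beta^{K(t-t_0)}\|\Gamma_{i,t_0}^K - \Gamma^*\|_F$ is deterministic given the initialization and decays geometrically to $0$. Hence the only nontrivial piece is the inner weighted sum
\[
R_t := \sum_{l=t_0+1}^{t} \beta^{K(t-l)}\,\|S_{i,l}^W - S^*\|_F.
\]

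To control $R_t$, I would apply a discrete Toeplitz-style argument pathwise. By Theorem \ref{thm: S conv}, for any $W$ with $c\sigma^W < 1$, the sequence $e_l := \|S_{i,l}^W - S^*\|_F$ satisfies $e_l \to 0$ almost surely. Fix a sample path in this probability-one event, fix $\epsilon > 0$, and pick $N$ large enough that $e_l < \epsilon(1-\beta^K)/2$ for all $l > N$. Splitting $R_t$ at $N$ produces a head $\sum_{l=t_0+1}^{N} \beta^{K(t-l)} e_l$, which is a fixed bounded sum multiplied by $\beta^{K(t-N)} \to 0$, and a tail bounded by $(\epsilon(1-\beta^K)/2)\sum_{l=N+1}^{t}\beta^{K(t-l)} \leq \epsilon/2$ uniformly in $t$. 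Letting $t \to \infty$ and then $\epsilon \to 0$ yields $R_t \to 0$ on this path, so $R_t \to 0$ almost surely. Combined with the geometric decay of the first term, this gives $\|\Gamma_{i,t}^K - \Gamma^*\|_F \to 0$ almost surely, which is the claimed convergence.

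The main obstacle is really just the clean execution of the Toeplitz-style argument together with the geometric-series estimate; both are standard. No additional probabilistic subtlety arises, because the almost-sure event from Theorem \ref{thm: S conv} on which $e_l \to 0$ is preserved by the deterministic pathwise manipulations above, and the Frobenius norm is respected throughout \eqref{eqn: algo 3 bound}. The other hypotheses of Theorem \ref{thm: conv of 1}, namely the uniform spectral bounds $a I_p \preceq \Gamma_{i,t}^k, \Gamma^* \preceq b I_p$ and the constant step size $\zeta_{i,t}^k = \zeta < a^2$, are carried over verbatim from the corollary's assumptions, so no new existence argument is needed here.
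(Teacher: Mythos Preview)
Your proposal is correct and follows essentially the same route as the paper: both invoke Theorem~\ref{thm: conv of 1}, observe that $\zeta<a^2$ forces $\beta\in(0,1)$ so the first term decays geometrically, and then reduce the remaining work to showing that $\sum_{l=t_0+1}^{t}\beta^{K(t-l)}\|S_{i,l}^W-S^*\|_F\to 0$ using Theorem~\ref{thm: S conv}. The only cosmetic difference is that the paper packages your Toeplitz-style splitting argument as a cited lemma (if $0<\alpha<1$ and $\gamma_t\to 0$ then $\sum_{l=0}^t\alpha^{t-l}\gamma_l\to 0$), whereas you prove it inline; the content is identical.
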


Having established that the iterates provided by DGAMA converge, we characterize the asymptotic rate of convergence for the dual variable of each agent $\Gamma_{i,t}^K$ to $\Gamma^*$. 

\begin{corollary}
\label{coro: rateofconvergence}
Assume $\forall i \in \mathcal{V}_a$, $\forall t \geq t_0$ and $\forall k \in \{1,2,\ldots,K\}$, the iterates $\Gamma_{i,t}^k$ satisfy $0 \prec aI_p \preceq \Gamma_{i,t}^k \preceq bI_p$, $\Gamma^*$ satisfies $aI_p \preceq \Gamma^* \preceq bI_p$, and $\zeta_{i,t}^k = \zeta$. Then for all $\Delta \in (0,\frac{1}{2})$, for all sample paths in a set of measure 1, there exists a $\bar{t}$, such that $\forall t \geq \bar{t} + 1$, the update of dual variable $\Gamma_{i,t}^K$ satisfies the following condition:
\begin{multline}
\|\Gamma_{i,t}^{K}-\Gamma^*\|_F  
\leq \beta^{K(t+1-\bar{t})}\|\Gamma_{\bar{t}}^K-\Gamma^*\|_F \\+ 2\sum_{m=1}^K\beta^{K-m}\sum_{l = \bar{t}+1}^t \beta^{K(t-l)}\Big[(c\sigma^W)^{l-\bar{t}}\|S_{i,\bar{t}}^W - S_{\bar{t}}^*\|_F \\
+ 40p\max_j\left(S^*\left(j,j\right)\right)\\ \cdot \big((\frac{1}{l})^{\frac{1}{2}-\Delta}+ 2\sum_{j = \bar{t}+1}^{l} (c\sigma^W)^{l+1-j}(\frac{1}{j-1})^{\frac{1}{2}-\Delta}\big)\Big],
\label{eqn: rate of conv}
\end{multline}
where $\beta = \max\left\{|1-\frac{\zeta}{a^2}|,|1-\frac{\zeta}{b^2}|\right\}$.
\end{corollary}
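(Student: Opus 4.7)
The plan is to combine the per-step deterministic bound from Theorem 1 with an almost-sure asymptotic rate on $\|S_{i,l}^W - S^*\|_F$. First, I would apply Theorem 1 with starting time $\bar{t}$ in place of $t_0$, obtaining, for every $t\geq \bar{t}+1$,
\begin{equation*}
\|\Gamma_{i,t}^K - \Gamma^*\|_F \leq \beta^{K(t+1-\bar{t})} \|\Gamma_{i,\bar{t}}^K - \Gamma^*\|_F + 2\sum_{m=1}^K \beta^{K-m} \sum_{l=\bar{t}+1}^t \beta^{K(t-l)} \|S_{i,l}^W - S^*\|_F,
\end{equation*}
so the whole task reduces to bounding the inner term $\|S_{i,l}^W-S^*\|_F$ by the bracketed expression in the corollary. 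I would then split this quantity via the triangle inequality as $\|S_{i,l}^W-S^*\|_F\leq e_l + f_l$, where $e_l := \|S_{i,l}^W - S_l\|_F$ is the consensus error against the centralized sample covariance and $f_l := \|S_l - S^*\|_F$ is the sampling error.

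For the sampling error, I would invoke a law-of-iterated-logarithm / Borel--Cantelli argument for the i.i.d.\ sub-exponential variables $X_{kr}X_{ks}$, which have mean $S^*(r,s)$ and variance $S^*(r,r)S^*(s,s)+S^*(r,s)^2 \le 2\max_j S^*(j,j)^2$. This yields that for any $\Delta\in(0,1/2)$, on a set of full probability, there is a (sample-path dependent) $\bar{t}$ such that, for all $l\geq\bar{t}$ and every entry $(r,s)$, $|S_l(r,s)-S^*(r,s)| \leq C\max_j S^*(j,j)(1/l)^{1/2-\Delta}$; summing the $p^2$ entries in Frobenius norm gives $f_l \leq 40 p \max_j S^*(j,j)(1/l)^{1/2-\Delta}$, which is the standalone $(1/l)^{1/2-\Delta}$ contribution in the bracket. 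For the consensus error, Lemma 1 applied with initial condition $S_{i,l}^0=S_{i,l-1}^W$ together with the triangle inequality gives the one-step recursion
\begin{equation*}
e_l \leq c\sigma^W \|S_{i,l-1}^W - S_l\|_F \leq c\sigma^W \bigl( e_{l-1} + f_{l-1} + f_l\bigr),
\end{equation*}
and substituting the pathwise bound on $f_{l-1},f_l$ (and using $1/l\leq 1/(l-1)$) yields $\|S_{l-1}-S_l\|_F \leq 2\cdot 40p\max_j S^*(j,j)(1/(l-1))^{1/2-\Delta}$. Unrolling this linear recursion from $\bar{t}$ to $l$ produces the leading geometric term $(c\sigma^W)^{l-\bar{t}} e_{\bar{t}}$ together with the geometric-weighted polynomial sum in the bracket. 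Adding $f_l$ and substituting back into the Theorem 1 bound delivers the stated inequality.

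The main obstacle is obtaining the pathwise statistical rate for $S_l - S^*$ with an explicit, dimension-aware constant: because the entries $X_{kr}X_{ks}$ are only sub-exponential (not sub-Gaussian), one must carefully upgrade the classical LIL rate $\sqrt{\log\log l/l}$ to the slightly weaker, uniformly-valid $(1/l)^{1/2-\Delta}$ rate, and pin down the constant in terms of $\max_j S^*(j,j)$ via the variance bound above. Once this pathwise rate is in hand, the remainder of the argument is a clean assembly of Theorem 1, Lemma 1, the triangle inequality, and the single geometric-plus-polynomial recursion on $e_l$.
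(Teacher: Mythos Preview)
Your proposal is correct and follows essentially the same route as the paper: apply the Theorem~1 bound starting from $\bar{t}$, split $\|S_{i,l}^W-S^*\|_F$ into a consensus error (handled via Lemma~1 and a geometric recursion in $c\sigma^W$) plus a sampling error (handled via the pathwise concentration rate $40p\max_j S^*(j,j)(1/l)^{1/2-\Delta}$), and substitute. The only difference is that the paper simply imports the concentration inequality $\|S_l-S^*\|_F\le 40p\max_j S^*(j,j)(1/l)^{1/2-\Delta}$ from a companion reference rather than deriving it via a sub-exponential LIL/Borel--Cantelli argument as you outline; your description of that step is accurate and more detailed than what the paper provides.
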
 

Note from the above result that when $0 < \beta < 1$, the asymptotic rate at which $\Gamma_{i,t}^K$ decreases to zero is dominated by the second term in \eqref{eqn: rate of conv}. Specifically, the rate is dominated by the convergence of the sample covariance matrix. Also, increasing $K$ will speed up the convergence of the estimates $\Gamma_{i,t}^K$. Comparing to the rate of convergence in the centralized \cite{OGAMA}, increasing $W$ will decrease the deviation of each agent's estimates to the centralized estimates.
We illustrate the dependence of this bound on $t$, $K$, and $W$ in the next section with experiments. 

\begin{figure*}
    \centering
    \begin{subfigure}[b]{0.3\textwidth}
         \centering
         \includegraphics[width=\textwidth]{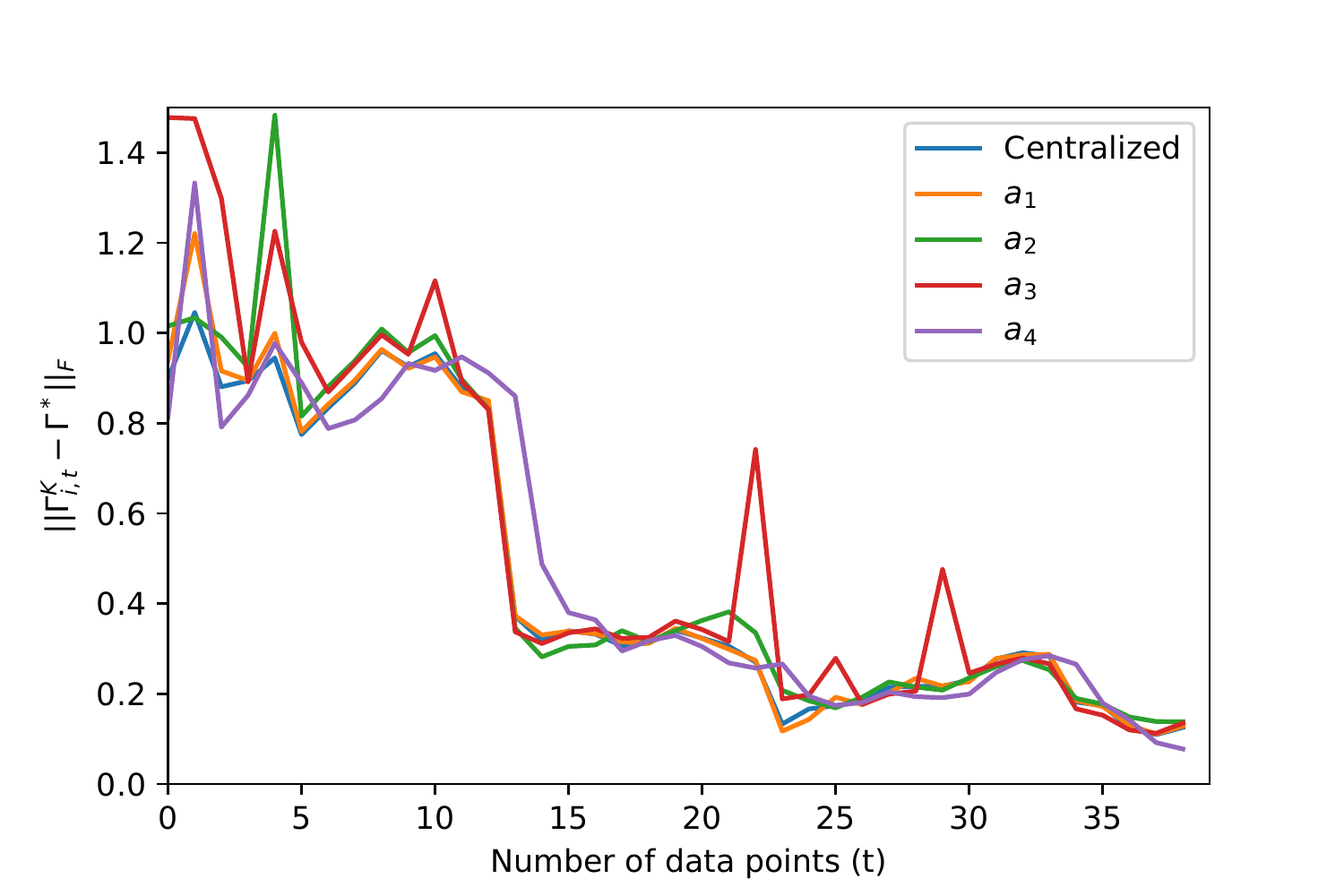}
         \caption{$K = 1, W = 1$}
    \end{subfigure}
    \begin{subfigure}[b]{0.3\textwidth}
         \centering
         \includegraphics[width=\textwidth]{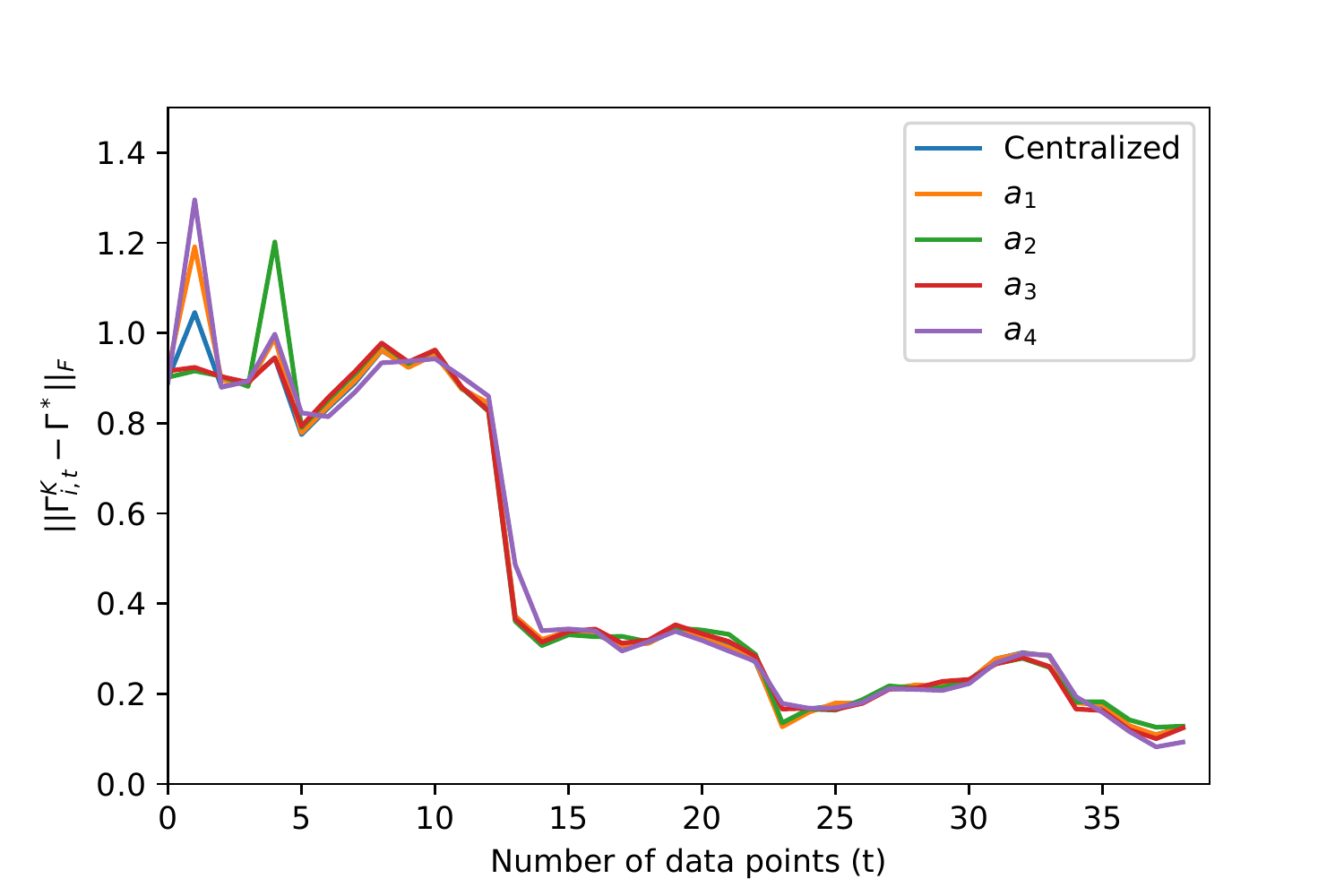}
         \caption{$K = 1, W = 2$}
     \end{subfigure}
        \begin{subfigure}[b]{0.3\textwidth}
         \centering
         \includegraphics[width=\textwidth]{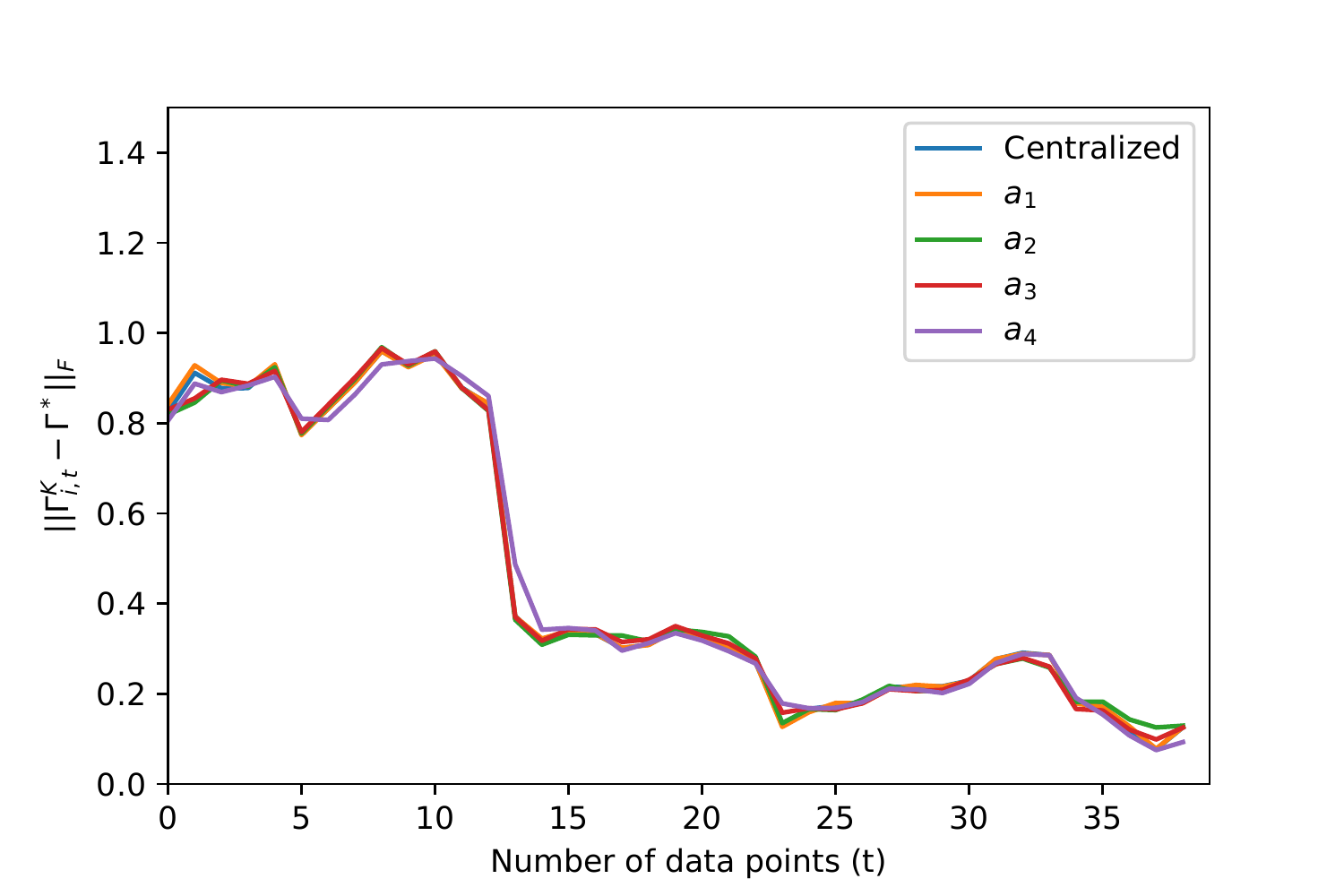}
         \caption{$K = 2, W = 2$}
     \end{subfigure}
     \caption{Simulations with varying parameters.}
     \label{fig: simu}
\end{figure*}
\section{Experiments}
\label{sec: experiments}
We build the agent and node networks shown in Fig. \ref{fig: demo} ($p=5,n=4$). We build the node network by generating a sparse Erdos-Renyi network representing $({S^*})^{-1}$, following the steps in \cite{gengraph}. We generate $40$ independent and identically distributed samples from a Gaussian distribution $\mathcal{N}(0, S^*)$ and run DGAMA Algorithm with $\lambda = 0.15$ and $t_0 = 10$.

In Fig. \ref{fig: simu}, we show the convergence of our algorithm with different values of $K$ (the number of iterations per data point) and $W$ (the number of communication rounds per data point) in comparison with the centralized online algorithm from \cite{OGAMA}. The deviation of the iterates, measured by $\|\Gamma_{i,t+1}^K- \Gamma^*\|_F$, converges to zero as the number of data points $t$ increases. We see that the distributed algorithm provides estimates comparable to that of the centralized online algorithm, with small $K$ and $W$. Moreover, as $W$ increases, the estimates are closer to that of the centralized algorithm; as $K$ increases, the estimates of the algorithm converges to $\Gamma^*$ faster.

\section{Conclusion}\label{sec: conclusion}
In this paper, we proposed a distributed and online sparse inverse covariance matrix estimation algorithm. We provided convergence guarantees of each agent's estimates to what would have been obtained at a centralized agent if all of the data were available simultaneously. Finally, we demonstrated the empirical performance of our algorithm. 

\bibliographystyle{unsrt}
\bibliography{root}

\begin{thebibliography}{10}

\bibitem{cov_cluster}
D.~Hallac, S.~Vare, S.~Boyd, and J.~Leskovec.
\newblock Toeplitz inverse covariance-based clustering of multivariate time
  series data.
\newblock In {\em Proceedings of the 23rd ACM SIGKDD International Conference
  on Knowledge Discovery and Data Mining}, KDD '17, page 215–223, New York,
  NY, USA, 2017. Association for Computing Machinery.

\bibitem{dist_infer_sensor}
P.~A. {Forero}, A.~{Cano}, and G.~B. {Giannakis}.
\newblock Distributed clustering using wireless sensor networks.
\newblock {\em IEEE Journal of Selected Topics in Signal Processing},
  5(4):707--724, 2011.

\bibitem{traffic_lasso}
S.~Sun, R.~Huang, and Y.~Gao.
\newblock Network-scale traffic modeling and forecasting with graphical lasso
  and neural networks.
\newblock {\em Journal of Transportation Engineering}, 138(11):1358--1367,
  2012.

\bibitem{traffic_tracking}
C.~Yang, M.~Bakich, and E.~Blasch.
\newblock Nonlinear constrained tracking of targets on roads.
\newblock In {\em 2005 7th International Conference on Information Fusion},
  volume~1, pages 8 pp.--, 2005.

\bibitem{dist_infer_social}
M.~{Doostmohammadian} and U.~A. {Khan}.
\newblock Graph-theoretic distributed inference in social networks.
\newblock {\em IEEE Journal of Selected Topics in Signal Processing},
  8(4):613--623, 2014.

\bibitem{bio}
L.~Schwiebert, S.~KS. Gupta, and J.~Weinmann.
\newblock Research challenges in wireless networks of biomedical sensors.
\newblock In {\em Proceedings of the 7th annual international conference on
  Mobile computing and networking}, pages 151--165, 2001.

\bibitem{pavez2018learning}
E.~Pavez, H.~E. Egilmez, and A.~Ortega.
\newblock Learning graphs with monotone topology properties and multiple
  connected components.
\newblock {\em IEEE Transactions on Signal Processing}, 66(9):2399--2413, 2018.

\bibitem{hassan2016topology}
S.~Hassan-Moghaddam, N.~K. Dhingra, and M.~R. Jovanovi{\'c}.
\newblock Topology identification of undirected consensus networks via sparse
  inverse covariance estimation.
\newblock In {\em IEEE 55th Conference on Decision and Control (CDC)}, pages
  4624--4629, 2016.

\bibitem{materassi2012problem}
D.~Materassi and M.~V. Salapaka.
\newblock On the problem of reconstructing an unknown topology via locality
  properties of the {Wiener} filter.
\newblock {\em IEEE Transactions on Automatic Control}, 57(7):1765--1777, 2012.

\bibitem{ayazoglu2011blind}
M.~Ayazoglu, M.~Sznaier, and N.~Ozay.
\newblock Blind identification of sparse dynamic networks and applications.
\newblock In {\em 50th IEEE Conference on Decision and Control and European
  Control Conference}, pages 2944--2950, 2011.

\bibitem{sojoudi2016equivalence}
S.~Sojoudi.
\newblock Equivalence of graphical lasso and thresholding for sparse graphs.
\newblock {\em The Journal of Machine Learning Research}, 17(1):3943--3963,
  2016.

\bibitem{yuan2007model}
M.~Yuan and Y.~Lin.
\newblock Model selection and estimation in the {Gaussian} graphical model.
\newblock {\em Biometrika}, 94(1):19--35, 2007.

\bibitem{friedman2008sparse}
J.~Friedman, T.~Hastie, and R.~Tibshirani.
\newblock Sparse inverse covariance estimation with the graphical lasso.
\newblock {\em Biostatistics}, 9(3):432--441, 2008.

\bibitem{banerjee2008model}
O.~Banerjee, L.~E. Ghaoui, and A.~d’Aspremont.
\newblock Model selection through sparse maximum likelihood estimation for
  multivariate {Gaussian} or binary data.
\newblock {\em Journal of Machine Learning Research}, 9(Mar):485--516, 2008.

\bibitem{rolfs2012iterative}
B.~Rolfs, B.~Rajaratnam, D.~Guillot, I.~Wong, and A.~Maleki.
\newblock Iterative thresholding algorithm for sparse inverse covariance
  estimation.
\newblock In {\em Advances in Neural Information Processing Systems}, pages
  1574--1582, 2012.

\bibitem{hsieh2011sparse}
C.-J. Hsieh, I.~S. Dhillon, P.~K. Ravikumar, and M.~A. Sustik.
\newblock Sparse inverse covariance matrix estimation using quadratic
  approximation.
\newblock In {\em Advances in Neural Information Processing Systems}, pages
  2330--2338, 2011.

\bibitem{scheinberg2010sparse}
K.~Scheinberg, S.~Ma, and D.~Goldfarb.
\newblock Sparse inverse covariance selection via alternating linearization
  methods.
\newblock In {\em Advances in Neural Information Processing Systems}, pages
  2101--2109, 2010.

\bibitem{dalal2017sparse}
O.~Dalal and B.~Rajaratnam.
\newblock Sparse {Gaussian} graphical model estimation via alternating
  minimization.
\newblock {\em Biometrika}, 104(2):379--395, 2017.

\bibitem{OGAMA}
T.~Yao and S.~Sundaram.
\newblock Online estimation of sparse inverse covariances.
\newblock In {\em American Control Conference (ACC)}, pages 1935--1940, 2021.

\bibitem{smalleigstubborn}
M.~{Pirani} and S.~{Sundaram}.
\newblock On the smallest eigenvalue of grounded {Laplacian} matrices.
\newblock {\em IEEE Transactions on Automatic Control}, 61(2):509--514, 2016.

\bibitem{stubbornagent}
M.~{Pirani} and S.~{Sundaram}.
\newblock Spectral properties of the grounded {Laplacian} matrix with
  applications to consensus in the presence of stubborn agents.
\newblock In {\em American Control Conference}, pages 2160--2165, 2014.

\bibitem{opinion_stubborn}
J.~Ghaderi and R.~Srikant.
\newblock Opinion dynamics in social networks: A local interaction game with
  stubborn agents.
\newblock In {\em American control conference}, pages 1982--1987. IEEE, 2013.

\bibitem{gengraph}
K.~Mohan, P.~London, M.~Fazel, D.~Witten, and S.~Lee.
\newblock Node-based learning of multiple {Gaussian} graphical models.
\newblock {\em The Journal of Machine Learning Research}, 15(1):445--488, 2014.

\bibitem{nedic2010constrained}
A.~Nedic, A.~Ozdaglar, and P.~A. Parrilo.
\newblock Constrained consensus and optimization in multi-agent networks.
\newblock {\em IEEE Transactions on Automatic Control}, 55(4):922--938, 2010.

\end{thebibliography}

\pagebreak
\appendix
\subsection{Proof of Theorem 1}
\label{subsec: theorem 1}
To prove Theorem 1, we first provide the following lemma showing that the eigenvalues of the estimates are bounded. 
\begin{lemma}
\label{lemma: eigen}
Let $t_0$ be a time such that the sample covariance matrix $S_{i,t_0}^w$ is nonsingular $\forall i \in \mathcal{V}_a$ and $\forall w \in \{1,2,\ldots,W\}$. 
Define $a = \min\{\lambda_{\min}(S^*),\inf_{i, w, t\geq t_0} \lambda_{\min}(S_{i,t}^w)\} - p\lambda$ and $b = \max\{\lambda_{\max}(S^*), \sup_{i, w, t\geq t_0} \lambda_{\max}(S_{i,t}^w)\}+p\lambda$. 
Then $\forall t \geq t_0, \forall k\in \{1,2,\ldots,K\}$, and $\lambda \in (0,\frac{1}{p}\min\{\lambda_{\min}(S^*),\inf_{i, w, t\geq t_0}\lambda_{\min}(S_{i,t}^w)\})$, the iterates $\Gamma_{i,t}^k$ of Algorithm \ref{algo: OGAMA} satisfy $0 \prec aI_p \preceq \Gamma_{i,t}^k \preceq bI_p$ and $aI_p \preceq \Gamma^*\preceq bI_p$.
\end{lemma}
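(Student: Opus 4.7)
The plan is to exploit the algebraic structure of the dual update to force every iterate into a fixed entry-wise $\lambda$-neighborhood of the current sample covariance estimate, and then translate this element-wise bound into a spectral bound using the maximum row-sum norm. The key observation is that the update $\Gamma_{i,t}^k = \mathcal{C}_\lambda(\Gamma_{i,t}^{k-1}-S_{i,t}^W+\zeta(\Gamma_{i,t}^{k-1})^{-1})+S_{i,t}^W$ guarantees $|\Gamma_{i,t}^k(l,m)-S_{i,t}^W(l,m)|\leq \lambda$ for all $l,m$, because the clip operator $\mathcal{C}_\lambda$ produces entries in $[-\lambda,\lambda]$. At the base $t=t_0$ the initialization $\Gamma_{i,t_0}^K = S_{i,t_0}^W + \lambda I_p$ satisfies the same entry-wise inequality (the difference being $\lambda I_p$), and the centralized fixed point $\Gamma^*$ satisfies $|\Gamma^*(l,m)-S^*(l,m)|\leq \lambda$ by feasibility for the dual problem \eqref{eqn: dual}.

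Given this entry-wise proximity, I would next bound $\|\Gamma_{i,t}^k - S_{i,t}^W\|_2$. For any symmetric $M \in \mathbb{R}^{p \times p}$ with $|M(l,m)|\leq \lambda$ entrywise, one has $\|M\|_2 \leq \|M\|_\infty \leq p\lambda$, where $\|\cdot\|_\infty$ denotes the induced maximum row-sum norm (equal to the maximum column sum by symmetry). Applying Weyl's inequality to the decomposition $\Gamma_{i,t}^k = S_{i,t}^W + (\Gamma_{i,t}^k - S_{i,t}^W)$ yields $\lambda_{\max}(\Gamma_{i,t}^k) \leq \lambda_{\max}(S_{i,t}^W)+p\lambda$ and $\lambda_{\min}(\Gamma_{i,t}^k) \geq \lambda_{\min}(S_{i,t}^W)-p\lambda$. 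The definitions of $a$ and $b$ in the statement then deliver the sandwich $aI_p \preceq \Gamma_{i,t}^k \preceq bI_p$, and the hypothesis $\lambda < \frac{1}{p}\min\{\lambda_{\min}(S^*),\inf_{i,w,t\ge t_0}\lambda_{\min}(S_{i,t}^w)\}$ guarantees $a>0$. Repeating the same Weyl argument for $\Gamma^*$ with $S^*$ in place of $S_{i,t}^W$ gives $aI_p\preceq \Gamma^*\preceq bI_p$.

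To make the argument rigorous I would close it with a two-level induction ensuring that $(\Gamma_{i,t}^{k-1})^{-1}$ always exists so that the update is well defined. At $t=t_0$ the assumed nonsingularity of $S_{i,t_0}^w$ together with $\lambda>0$ makes $S_{i,t_0}^W+\lambda I_p$ positive definite. For $t>t_0$, I would induct on $k$ using $\Gamma_{i,t}^0 = \Gamma_{i,t-1}^K$; the spectral bounds above force each $\Gamma_{i,t}^k \succeq aI_p \succ 0$, so the next inverse is well defined and the induction carries through. The main obstacle is conceptual bookkeeping rather than any hard analytic estimate: one must keep the entrywise $\lambda$-bound tethered to the $S_{i,t}^W$ used in the \emph{current} update (and not to $S_{i,t-1}^W$, where the argument would otherwise appear to break at the transition between time steps), and verify that the uniform constants $a,b$ defined via infima and suprema over all $i,w,t\ge t_0$ indeed accommodate every $S_{i,t}^W$ that the algorithm produces.
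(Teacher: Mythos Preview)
Your proposal is correct and follows essentially the same approach as the paper. The paper's own proof is a one-line deferral to the centralized OGAMA result, merely noting that one replaces the single sample covariance by the infimum/supremum over all agents $i$, communication rounds $w$, and times $t\ge t_0$; you have written out exactly the argument that this deferral points to---the entrywise $\lambda$-bound forced by $\mathcal{C}_\lambda(\cdot)+S_{i,t}^W$, the $\|\cdot\|_2\le p\lambda$ estimate, Weyl's inequality, and the induction on $(t,k)$ to keep the inverses well defined.
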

\begin{proof}
The proof is similar to \cite{OGAMA}, where we simply consider the infimum and supremum of the eigenvalues of $S_{i,t}^w$, $\forall i\in \mathcal{V}_a, \forall w\in\{1,2,\ldots,W\}$, and $\forall t \geq t_0$, for $a$ and $b$ respectively.
\end{proof}

The following lemma provides a bound of $\|\Gamma_{i,t}^k -\Gamma^*\|$ using the error from the previous iteration.  

\begin{lemma}
\label{lemma:k}
Assume $\forall i \in \mathcal{V}_a$, $\forall t \geq t_0$, and $\forall k \in \{1, 2, \ldots, K\}$, iterates $\Gamma_{i,t}^k$ satisfy $a I_p\preceq \Gamma_{i,t}^k \preceq b I_p$ and $\Gamma^*$ satisfies $aI_p \preceq \Gamma^* \preceq bI_p$ for some fixed constants $0<a<b$.
At the $k$-th iteration of time-step $t$, we have the following bound for the dual variable:
\[
\|\Gamma_{i,t}^k-\Gamma^*\|_F \leq \beta_{i,t}^{k-1}\|\Gamma_{i,t}^{k-1} - \Gamma^*\|_F + 2\|S_{i,t}^W-S^*\|_F,
\] where $\beta_{i,t}^{k-1} = \max\left\{|1-\frac{\zeta_{i,t}^{k-1}}{a^2}|,|1-\frac{\zeta_{i,t}^{k-1}}{b^2}|\right\}$.
\end{lemma}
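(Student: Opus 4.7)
\subsection*{Proof proposal for Lemma \ref{lemma:k}}

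The plan is to write the iterate $\Gamma_{i,t}^k$ and the fixed point $\Gamma^*$ in the common update form
$\Gamma \mapsto \mathcal{C}_\lambda(\Gamma - S + \zeta \Gamma^{-1}) + S$, subtract, and then bound the resulting difference term-by-term using (i) the non-expansiveness of the clip operator and (ii) a contraction estimate for the map $\Gamma \mapsto \Gamma + \zeta \Gamma^{-1}$ on the order interval $[aI_p, bI_p]$.

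First I would record the two one-step identities
\begin{align*}
\Gamma_{i,t}^k &= \mathcal{C}_\lambda\bigl(\Gamma_{i,t}^{k-1} - S_{i,t}^W + \zeta_{i,t}^{k-1}(\Gamma_{i,t}^{k-1})^{-1}\bigr) + S_{i,t}^W,\\
\Gamma^* &= \mathcal{C}_\lambda\bigl(\Gamma^* - S^* + \zeta_{i,t}^{k-1}(\Gamma^*)^{-1}\bigr) + S^*,
\end{align*}
the second being the fixed-point characterization of $\Gamma^*$ from \eqref{eqn: dual update} with $S_t$ replaced by $S^*$. Subtracting and applying the triangle inequality, together with the fact that $\mathcal{C}_\lambda$ is applied entrywise and is 1-Lipschitz on the real line (hence 1-Lipschitz in Frobenius norm on symmetric matrices), gives
\[
\|\Gamma_{i,t}^k-\Gamma^*\|_F \le \bigl\|(\Gamma_{i,t}^{k-1}-\Gamma^*) + \zeta\bigl((\Gamma_{i,t}^{k-1})^{-1}-(\Gamma^*)^{-1}\bigr)\bigr\|_F + 2\|S_{i,t}^W - S^*\|_F,
\]
which already accounts for the factor of $2$ in the statement (one copy of $S_{i,t}^W-S^*$ emerges from inside the clip argument, one from the $+S$ offset outside).

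The main obstacle, and the heart of the proof, is then to show
\[
\bigl\|(X-Y) + \zeta(X^{-1}-Y^{-1})\bigr\|_F \le \beta_{i,t}^{k-1}\|X-Y\|_F
\]
for $X=\Gamma_{i,t}^{k-1}$ and $Y=\Gamma^*$ in $[aI_p,bI_p]$. I would handle this by a calculus-along-the-segment argument: set $Z_s = Y + s(X-Y)$, which lies in $[aI_p, bI_p]$ for $s\in[0,1]$ by convexity of the order interval, and write
\[
(X-Y) + \zeta(X^{-1}-Y^{-1}) = \int_0^1 \bigl[(X-Y) - \zeta\, Z_s^{-1}(X-Y)Z_s^{-1}\bigr]\,ds,
\]
using $\frac{d}{ds}Z_s^{-1} = -Z_s^{-1}(X-Y)Z_s^{-1}$. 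For each fixed $s$, diagonalize $Z_s^{-1} = U_s \Lambda_s U_s^\top$ with $\Lambda_s=\diag(\mu_1,\ldots,\mu_p)$ where $\mu_j \in [1/b,1/a]$, and set $\tilde H = U_s^\top (X-Y) U_s$. The $(j,\ell)$ entry of $\tilde H - \zeta \Lambda_s \tilde H \Lambda_s$ equals $(1-\zeta\mu_j\mu_\ell)\tilde H_{j\ell}$, and since $\zeta\mu_j\mu_\ell \in [\zeta/b^2,\zeta/a^2]$ we get $|1-\zeta\mu_j\mu_\ell| \le \beta_{i,t}^{k-1}$, so by orthogonal invariance of the Frobenius norm
\[
\|(X-Y) - \zeta Z_s^{-1}(X-Y)Z_s^{-1}\|_F \le \beta_{i,t}^{k-1}\|X-Y\|_F.
\]
Integrating over $s\in[0,1]$ yields the desired contraction, and substituting back completes the proof.

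The only real subtlety is in the last step, where one must use simultaneous diagonalization of a ``sandwich'' operator $H \mapsto Z_s^{-1} H Z_s^{-1}$ by the same orthogonal basis on both sides (legitimate because $Z_s$ is SPD). The rest of the argument is triangle inequality, non-expansiveness of $\mathcal{C}_\lambda$, and the fixed-point characterization of $\Gamma^*$, all of which are routine given Lemma \ref{lemma: eigen} supplying the spectral bounds $a,b$.
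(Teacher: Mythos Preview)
Your proposal is correct and follows the same route as the paper's proof: both subtract the fixed-point identity for $\Gamma^*$ from the iterate's update, use the triangle inequality and the $1$-Lipschitz property of $\mathcal{C}_\lambda$ to isolate the $2\|S_{i,t}^W-S^*\|_F$ term, and then apply the contraction bound $\|(X-Y)+\zeta(X^{-1}-Y^{-1})\|_F\le\beta\,\|X-Y\|_F$ on the order interval $[aI_p,bI_p]$. The only difference is that the paper cites this last contraction inequality from \cite{rolfs2012iterative}, whereas you supply a self-contained argument via the segment integral and diagonalization of $Z_s^{-1}$---a welcome elaboration, and fully correct.
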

\begin{proof}
 Assume $aI_p \preceq \Gamma_t^k \preceq bI_p , \forall t \geq t_0$, $\forall k \in \{1,2,\dots,K\}$ and $aI_p \preceq \Gamma^* \preceq bI_p$. Using the dual update \eqref{eqn: dual update}, matrix norm triangle inequality, and the non-expansive property of the clip function $\mathcal{C}_\lambda$\cite{dalal2017sparse}, we have:
\begin{align*}
    & \|\Gamma_{i,t}^k - \Gamma^*\|_F = \|\mathcal{C}_\lambda(\Gamma_{i,t}^{k-1}+\zeta_{i,t}^{k-1}(\Gamma_{i,t}^{k-1})^{-1}-S_{i,t}^W)+S_{i,t}^W
    \\&\qquad-\mathcal{C}_\lambda(\Gamma^*+\zeta_{i,t}^{k-1}(\Gamma^*)^{-1}-S^*)-S^*\|_F\\
    &\leq \|\mathcal{C}_\lambda(\Gamma_{i,t}^{k-1}+\zeta_{i,t}^{k-1}(\Gamma_{i,t}^{k-1})^{-1}-S_{i,t}^W)
    \\&\qquad -\mathcal{C}_\lambda(\Gamma^*+\zeta_{i,t}^{k-1}(\Gamma^*)^{-1}-S^*)\|_F+\|S_{i,t}^W-S^*\|_F.\\
    &\leq \|(\Gamma_{i,t}^{k-1}+\zeta_{i,t}^{k-1}(\Gamma_{i,t}^{k-1})^{-1}-S_{i,t}^W)\\
    &\qquad-(\Gamma^*+\zeta_{i,t}^{k-1}(\Gamma^*)^{-1}-S^*)\|_F +\|S_{i,t}^W-S^*\|_F\\
    &\leq \|(\Gamma_{i,t}^{k-1}+\zeta_{i,t}^{k-1}(\Gamma_{i,t}^{k-1})^{-1})-(\Gamma^* + \zeta_{i,t}^{k-1}(\Gamma^*)^{-1})\|_F
    \\&\qquad +2\|S_{i,t}^W-S^*\|_F \\
    &\leq \max\left\{\left| 1-\frac{\zeta_{i,t}^{k-1}}{b^2}\right|,\left| 1-\frac{\zeta_{i,t}^{k-1}}{a^2}\right|\right\} \|\Gamma_{i,t}^{k-1}-\Gamma^*\|_F \\
    &\qquad+ 2\|S_{i,t}^W - S^*\|_F,
\end{align*}
where the last inequality follows from Lemma in \cite{rolfs2012iterative}.
\end{proof}

\subsubsection*{Proof of Theorem 1}
Assume $\forall i \in \mathcal{V}_a$, $\forall t \geq t_0$, and $\forall k \in \{1,2,\dots,K\}$, we can select a constant step size $\zeta = \zeta_{i,t}^k$, where $0 \prec aI_p \preceq \Gamma_t^k \preceq bI_p$ and $aI_p \preceq \Gamma^* \preceq bI_p$.
For $t \in \{t_0+1,t_0+2,\ldots\}$, using the results from Lemma \ref{lemma:k},
\begin{align*}
    &\|\Gamma_{i,t_0+1}^1 - \Gamma^*\|_F
    \leq \beta \|\Gamma_{i,t_0}^K - \Gamma^*\|_F+ 2\|S_{i,t_0+1}^W-S^*\|_F \\
    &\|\Gamma_{i,t_0+1}^2 - \Gamma^*\|_F
    \leq \beta^2 \|\Gamma_{i,t_0}^K - \Gamma^*\|_F+2\beta\|S_{i,t_0+1}^W-S^*\|_F\\
    &\qquad\qquad\qquad+2\|S_{i,t_0+1}^W-S^*\|_F.
    \end{align*}
    After $K$ iterations,  
    \begin{multline*}
    \|\Gamma_{i,t_0+1}^K - \Gamma^*\|_F
    \leq \beta^K \|\Gamma_{i,t_0}^K - \Gamma^*\|_F \\+ 2\sum_{m=1}^K \beta^{K-m}\|S_{i,t_0+1}^W-S^*\|_F.     
    \end{multline*}
    At time $t$, at the end of $K$ iterations, we arrive at
    \begin{align*}
    &\|\Gamma_{i,t}^K - \Gamma^*\|_F \leq \beta^{K(t-t_0)} \|\Gamma_{i,t_0}^K - \Gamma^*\|_F\\
    &\qquad \qquad+2\sum_{l=t_0+1}^{t} \beta^{K(t-l)}\sum_{m=1}^K \beta^{K-m}\|S_{i,l}^W-S^*\|_F.
\end{align*}

\subsection{Proof of Lemma 1}
\label{subsec: lemma 1}
Consider any entry $(l,m)$ of $S_t$. Under Def. \ref{def: jointobserv}, there is at least one leader agent observing $S_t(l,m)$. If $i$ observes $S_t(l,m)$, then $S_{i,t}^w(l,m) = S_{t}(l,m)$ $\forall i \in L$ and $\forall w \in \{1,2,\ldots,W\}$. Since graph $\mathcal{G}_a$ is connected, the values of every follower agent will converge to the same value as the leader agents, i.e., $\lim_{w \to \infty}S_{i,t}^w(l,m) = S_{t}(l,m), \forall i \in F,$  geometrically with a rate at least equal to the largest eigenvalue of $P_{FF,lm}$ \cite{stubbornagent, opinion_stubborn}. 

\subsection{Proof of Theorem 2}
\label{subsec: Theorem 2}
To prove Theorem 2, we introduce the following lemma.
\begin{lemma}[\cite{nedic2010constrained}]\label{lemma: nedic}
Let $0<\alpha<1$ and let $\{\gamma_t\}$ be a positive scalar sequence. Assume that $\lim_{t\to\infty}\gamma_t = 0$. Then
\[\lim_{t\to \infty}\sum_{l=0}^{t} \alpha^{t-l}\gamma_l = 0.\]
\end{lemma}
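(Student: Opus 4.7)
The plan is to prove this classical decay-of-convolution result by a standard $\varepsilon$-splitting argument. Since $\gamma_t\to 0$, given any $\varepsilon>0$ I would fix an integer $N=N(\varepsilon)$ such that $\gamma_l<\varepsilon(1-\alpha)/2$ for every $l\geq N$. For every $t\geq N$ I would split
\[
\sum_{l=0}^{t}\alpha^{t-l}\gamma_l \;=\; \sum_{l=0}^{N-1}\alpha^{t-l}\gamma_l \;+\; \sum_{l=N}^{t}\alpha^{t-l}\gamma_l,
\]
handling the ``head'' (finitely many early terms with possibly large $\gamma_l$ but geometrically small weights) and the ``tail'' (terms with small $\gamma_l$ and summable weights) separately.

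For the head, I would observe that $t-l\geq t-N+1$ for every $l<N$, so $\alpha^{t-l}\leq \alpha^{t-N+1}$. Because $\{\gamma_l\}_{l=0}^{N-1}$ is a \emph{fixed} finite set of nonnegative numbers independent of $t$, its sum $M\triangleq\sum_{l=0}^{N-1}\gamma_l$ is a constant, and hence the head is dominated by $\alpha^{t-N+1}M$. Since $0<\alpha<1$, this quantity tends to $0$ as $t\to\infty$, so I can pick $t^{\star}\geq N$ large enough that $\alpha^{t-N+1}M<\varepsilon/2$ for every $t\geq t^{\star}$.

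For the tail I would use the uniform bound $\gamma_l<\varepsilon(1-\alpha)/2$ together with the geometric series estimate
\[
\sum_{l=N}^{t}\alpha^{t-l}\gamma_l \;\leq\; \frac{\varepsilon(1-\alpha)}{2}\sum_{l=N}^{t}\alpha^{t-l} \;\leq\; \frac{\varepsilon(1-\alpha)}{2}\cdot\frac{1}{1-\alpha} \;=\; \frac{\varepsilon}{2}.
\]
Adding the two bounds gives $\sum_{l=0}^{t}\alpha^{t-l}\gamma_l<\varepsilon$ for all $t\geq t^{\star}$, and since $\varepsilon$ was arbitrary the claimed limit follows.

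There is essentially no conceptual obstacle here — this is a textbook real-analysis fact, which is presumably why the authors attribute it to \cite{nedic2010constrained} and may even omit the proof. The only thing to get right is the calibration: the threshold $N$ must be chosen so that the tail contributes at most $\varepsilon/2$, after which $t^{\star}$ is chosen (depending on the \emph{fixed} constant $M$ determined by $N$) so that the head also contributes at most $\varepsilon/2$. It is important not to reverse this order, since choosing $N$ first is what makes $M$ independent of $t$.
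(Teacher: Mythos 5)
Your proof is correct. The paper states this lemma by citation to \cite{nedic2010constrained} and gives no proof of its own, so there is nothing to compare against; your $\varepsilon$-splitting argument --- choosing $N$ first so the tail is bounded by $\tfrac{\varepsilon}{2}$ via the geometric series, then letting the finite head $\alpha^{t-N+1}\sum_{l=0}^{N-1}\gamma_l$ decay --- is the standard and complete way to establish the result, and your remark about the order of quantifiers (fix $N$ before $t^{\star}$) is exactly the point that needs care.
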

\subsubsection*{Proof of Theorem 2}
First, we characterize an error bound for $S_{i,t}^W$. Using triangle inequality and applying Lemma \ref{lemma: consensus}, we have
\begin{align*}
\|S_{i,1}^W-S^*\|_F 
&\leq \|S_{i,1}^W - S_{1}\|_F + \|S_{1}-S^*\|_F\\
& \leq c\sigma^W \|S_{i,1}^0 - S_{1}\|_F + \|S_{1}-S^*\|_F.
\end{align*}
Define $e_{t} = S_{t} - S_{t-1}, \forall t \geq 1$. Recall that for initialization $S_{i,0}^w =\textbf{0}, \forall i,w$ and $S_0 = \textbf{0}$. As a result, 
\[
\|S_{i,1}^W - S_{1}\|_F \leq c\sigma^W\|S_0 + e_1\|_F = c\sigma^W\|e_1\|_F,\]
and
\[\|S_{i,1}^W-S^*\|_F 
\leq c\sigma^W \|e_1\|_F + \|S_{1}-S^*\|_F.\]
Similarly,
\begin{align*}
\|S_{i,2}^W&-S^*\|_F 
\leq c\sigma^W \|S_{i,2}^0 - S_{2}\|_F + \|S_{2}-S^*\|_F\\
& = c\sigma^W \|S_{i,1}^W - (S_{1}+e_2)\|_F + \|S_{2}-S^*\|_F\\
&\leq c\sigma^W \|S_{i,1}^W -S_{1} \|_F + c\sigma^W\|e_2\|_F + \|S_{2} - S^*\|_F\\
& \leq (c\sigma^W)^2 \|e_1\|_F + c\sigma^W\|e_2\|_F + \|S_2-S^*\|_F.
\end{align*}
At time step $t$, after $W$ rounds of communication, the difference between the sample covariance estimation of agent $i \in \mathcal{V}_a$ and the ground truth sample covariance $S^*$ is bounded by
\begin{equation}
    \|S_{i,t}^W - S^*\|_F \leq \|S_{t}-S^*\|_F + \sum_{j = 1}^{t} (c\sigma^W)^{t+1-j}\|e_j\|_F.
    \label{eqn: S bound}    
\end{equation}
From the law of large numbers, as $t \to \infty$, $S_t \to S^*$ almost surely and $e_t \to 0$ almost surely. Thus, as $t\to\infty$, the first term of \eqref{eqn: S bound} goes to zero almost surely. The second term is an instance of Lemma \ref{lemma: nedic}, where $\alpha = c\sigma^W < 1$ and $\gamma_t = \|e_t\|_F \to 0$ almost surely. 

\begin{figure}
    \centering
    \includegraphics[width =0.8 \linewidth]{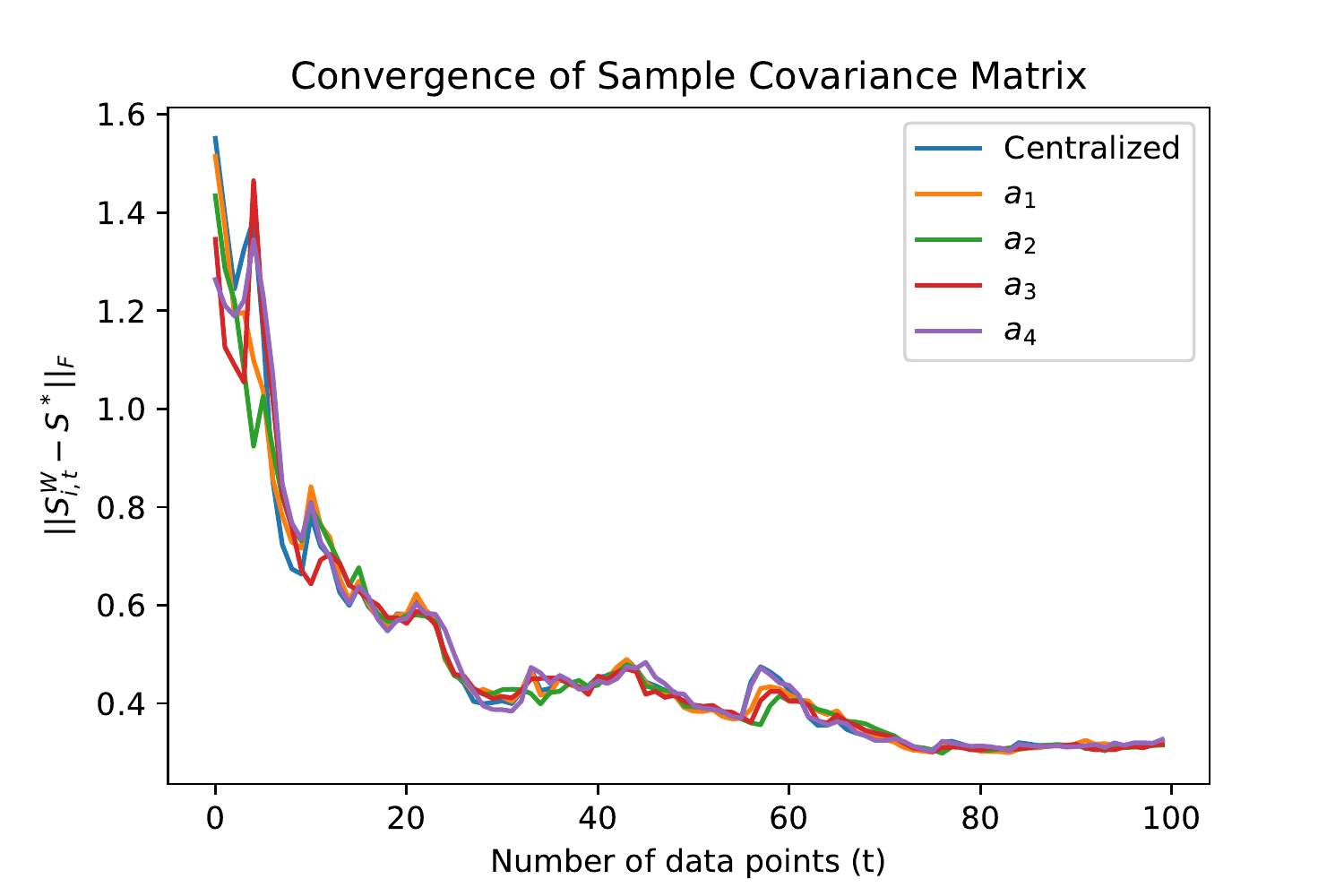}
    \caption{Demonstration of sample covariance convergence.}
    \label{fig: S cov}
\end{figure}

In Fig. \ref{fig: S cov}, we use the experiment setup as in Sec.\ref{sec: experiments}, let $W = 1$ , and demonstrate the convergence of sample covariance matrix of agents, evaluated by $\|S_{i,t}^W - S^*\|_F$.

\subsection{Proof of Corollary 1}
\label{subsec: corollary 1}
From \eqref{eqn: algo 3 bound} in Theorem \ref{thm: conv of 1}, if $\forall t\geq t_0$ and $\forall k \in \{1,2,\ldots,K\}$, $\zeta_{i,t}^k = \zeta < a^2 $, then $0<\beta <1$ . The first term $\beta^{K(t-t_0)} \|\Gamma_{i,t_0}^K - \Gamma^*\|_F$ converges to zero exponentially. 

The sum $\sum_{l=t_0+1}^{t}\beta^{K(t-l)}\|S_{i,l}^W-S^*\|_F$  of the second term is another instance of Lemma \ref{lemma: nedic}, where $\alpha = \beta^K < 1$ and $\gamma_t = \|S_{i,t}^W - S^*\|_F \to 0$ almost surely as $t \to \infty$ by Theorem \ref{thm: S conv}.

\subsection{Proof of Corollary \ref{coro: rateofconvergence}}
\label{subsec: corollary 2}
We start with the following result on the convergence of the sample covariance matrix.
\begin{lemma}[\cite{OGAMA}]\label{lemma: S concentrate}
Consider a sequence of independent and identically distributed $p$-dimensional random vectors $\{X_1, X_{2}, \ldots\}$, where each $X_t$, $t\in \mathbb{N}$, is drawn from $N(0, S^*)$. For all $t\in\mathbb{N}$, let $S_t = \frac{1}{t}\sum_{j = 1}^{t}X_jX_j^T$ be the sample covariance matrix for the data received up to time $t$. Then, for all $\Delta \in (0,\frac{1}{2})$, there exists a set of sample paths of measure 1, such that for each sample path in that set, there exists a finite time $\bar{t}$, such that for all $t\geq \bar{t}$, the sample covariance matrix satisfies the concentration inequality: 
\begin{equation*}
    \|S_t - S^*\|_F \leq 40p\max_j(S^*(j,j))\left(\frac{1}{t}\right)^{\frac{1}{2}-\Delta}.
\end{equation*}
\end{lemma}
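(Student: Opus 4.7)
The plan is to reduce the Frobenius-norm statement to an entrywise scalar concentration, and then extract an almost-sure rate via Bernstein's inequality combined with Borel--Cantelli. First, I would use the crude bound
$$\|A\|_F \le p\,\max_{i,j}|A(i,j)|$$
for any $p\times p$ matrix $A$, so that it suffices to show that for some absolute constant $C$,
$$\max_{i,j}\,|S_t(i,j)-S^*(i,j)| \le C\,\max_j(S^*(j,j))\,(1/t)^{1/2-\Delta}$$
holds eventually on a set of full measure; the stated constant $40$ will absorb $C$ at the end.

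For a fixed pair $(i,j)$, I would write
$$S_t(i,j) - S^*(i,j) = \frac{1}{t}\sum_{k=1}^{t}\bigl(X_k(i)X_k(j) - S^*(i,j)\bigr),$$
a centered average of i.i.d.\ random variables. Since $X_k\sim\mathcal{N}(0,S^*)$, each product $X_k(i)X_k(j)$ is sub-exponential: through the polarization identity
$$X_k(i)X_k(j) = \tfrac{1}{4}\bigl((X_k(i)+X_k(j))^2 - (X_k(i)-X_k(j))^2\bigr),$$
it is a linear combination of two chi-squared-type variables whose $\psi_1$-norm is bounded by a universal constant times $\max_j S^*(j,j)$ (using Cauchy--Schwarz to also control $|S^*(i,j)|$). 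Bernstein's inequality for sub-exponential sums then gives, for $\epsilon_t := C\,\max_j(S^*(j,j))\,(1/t)^{1/2-\Delta}$ and $t$ large enough for $\epsilon_t$ to lie in the quadratic-tail regime,
$$\mathbb{P}\bigl(|S_t(i,j)-S^*(i,j)|>\epsilon_t\bigr) \le 2\exp(-c\,t^{2\Delta}).$$

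Next, I would apply a union bound across the $p^2$ entries and sum in $t$:
$$\sum_{t=1}^{\infty}\mathbb{P}\Bigl(\max_{i,j}|S_t(i,j)-S^*(i,j)|>\epsilon_t\Bigr) \le \sum_{t=1}^{\infty} 2p^2\exp(-c\,t^{2\Delta}) < \infty,$$
which converges because $2\Delta>0$. By the Borel--Cantelli lemma, on a set of sample paths of full measure the bad event $\max_{i,j}|S_t(i,j)-S^*(i,j)|>\epsilon_t$ occurs only finitely often, so there exists a (sample-path-dependent but almost surely finite) $\bar{t}$ past which it fails for every $t\geq\bar{t}$. Translating back through the Frobenius inequality gives $\|S_t - S^*\|_F \le pC\,\max_j(S^*(j,j))\,(1/t)^{1/2-\Delta}$, which I would absorb into the claimed constant $40p$.

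The main obstacle is the numerical-constant bookkeeping: pinning down the value $40$ requires an explicit universal Bernstein constant together with an explicit $\psi_1$-norm bound on $X_k(i)X_k(j)-S^*(i,j)$, both of which are available in the standard references but tedious to track. A secondary, minor check is verifying that $\epsilon_t$ eventually lies in the Gaussian-tail (quadratic) regime of Bernstein's inequality; this is automatic since $\epsilon_t\to 0$, so only finitely many small-$t$ terms need to be handled separately and they are trivially absorbed into $\bar{t}$. An alternative route via Hanson--Wright applied to $\mathrm{vec}(X_k)^\top A\,\mathrm{vec}(X_k)$ would give the Frobenius-norm bound more directly, but the entrywise argument is simpler and suffices for the stated rate.
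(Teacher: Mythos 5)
Your proof is correct and follows essentially the standard route for this result (which the paper itself does not prove but imports from \cite{OGAMA}): reduce the Frobenius norm to the entrywise maximum, apply a sub-exponential tail bound to each entry of the sample covariance, union-bound over the $p^2$ entries, and invoke Borel--Cantelli, correctly noting that the first Borel--Cantelli lemma needs only summability and not independence across $t$. The one piece you leave as bookkeeping, the constant $40$, comes from replacing generic Bernstein constants with the explicit Gaussian sample-covariance tail bound of Ravikumar, Wainwright, Raskutti, and Yu (valid for deviations below $40\max_j S^*(j,j)$, a threshold that $\epsilon_t\to 0$ eventually satisfies and that is absorbed into $\bar t$ exactly as you describe).
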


The above result, together with \eqref{eqn: S bound}, leads directly to the following characterization of the convergence rate of the sample covariance matrices $S_{i,t}^w$ of each agent $i \in \mathcal{V}_a$.

\begin{lemma}
\label{lemma: sample cov}
Consider a sequence of independent and identically distributed $p$-dimensional random vectors $\{X_1, X_{2}, \ldots\}$, where each $X_t$, $t\in \mathbb{N}$, is drawn from $N(0, S^*)$. For all $t\in\mathbb{N}$, let $S_t = \frac{1}{t}\sum_{j = 1}^{t}X_jX_j^T$ be the sample covariance matrix for the data received up to time $t$. Then, for all $\Delta \in (0,\frac{1}{2})$, there exists a set of sample paths of measure 1, such that for each sample path in that set, there exists a finite time $\bar{t}$, such that for all $t\geq \bar{t} + 1$, the sample covariance matrix satisfies the concentration inequality: 
\begin{multline}
\label{eqn: sample cov}
\|S_{i,t}^W - S^*\|_F \leq (c\sigma^W)^{t-\bar{t}}\|S_{i,\bar{t}}^W - S_{\bar{t}}^*\|_F +  40p\max_j(S^*(j,j))\\
\cdot \Bigg(\left(\frac{1}{t}\right)^{\frac{1}{2}-\Delta} + 2\sum_{j = \bar{t}+1}^{t} (c\sigma^W)^{t+1-j}\left(\frac{1}{j-1}\right)^{\frac{1}{2}-\Delta}\Bigg).    
\end{multline}
\end{lemma}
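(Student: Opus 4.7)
The plan is to combine the two ingredients already in hand: the deterministic recursive bound derived in the proof of Theorem \ref{thm: S conv}, and the almost-sure concentration inequality from Lemma \ref{lemma: S concentrate}. The key idea is that the recursion used to obtain \eqref{eqn: S bound} was unrolled all the way back to $t=0$, which is necessary for the almost-sure convergence statement; but to get a \emph{rate}, I must restart the same recursion at the later time $\bar{t}$, at which point the concentration inequality becomes available to control all relevant increments.

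First, I would reproduce the one-step inequality from the proof of Theorem \ref{thm: S conv}. By the triangle inequality, Lemma \ref{lemma: consensus}, and $S_{i,t}^{0}=S_{i,t-1}^{W}$, $S_t = S_{t-1}+e_t$, we have
\begin{equation*}
\|S_{i,t}^{W}-S_t\|_F \;\leq\; c\sigma^{W}\|S_{i,t-1}^{W}-S_{t-1}\|_F + c\sigma^{W}\|e_t\|_F.
\end{equation*}
Iterating this from $\bar{t}+1$ up to $t$ yields
\begin{equation*}
\|S_{i,t}^{W}-S_t\|_F \;\leq\; (c\sigma^{W})^{t-\bar{t}}\|S_{i,\bar{t}}^{W}-S_{\bar{t}}\|_F + \sum_{j=\bar{t}+1}^{t}(c\sigma^{W})^{t+1-j}\|e_j\|_F,
\end{equation*}
and one more triangle inequality produces
\begin{equation*}
\|S_{i,t}^{W}-S^*\|_F \leq (c\sigma^{W})^{t-\bar{t}}\|S_{i,\bar{t}}^{W}-S_{\bar{t}}\|_F + \sum_{j=\bar{t}+1}^{t}(c\sigma^{W})^{t+1-j}\|e_j\|_F + \|S_t-S^*\|_F.
\end{equation*}
This is the deterministic backbone of the claim; the two sample-path dependent terms left to control are $\|S_t-S^*\|_F$ and $\|e_j\|_F$ for $j\geq \bar{t}+1$.

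Second, I would invoke Lemma \ref{lemma: S concentrate}, which supplies a measure-one set of sample paths and, for each such path, a finite $\bar{t}$ such that $\|S_\tau-S^*\|_F \leq 40p\max_k(S^*(k,k))(1/\tau)^{1/2-\Delta}$ for every $\tau \geq \bar{t}$. This directly handles the trailing $\|S_t-S^*\|_F$ term, producing the $(1/t)^{1/2-\Delta}$ contribution in the statement. For each $j\geq \bar{t}+1$ we have $j-1\geq \bar{t}$, so applying the concentration bound to both $S_j$ and $S_{j-1}$ and using the triangle inequality gives
\begin{equation*}
\|e_j\|_F \leq \|S_j-S^*\|_F + \|S_{j-1}-S^*\|_F \leq 40p\max_k(S^*(k,k))\Bigl[(1/j)^{1/2-\Delta} + (1/(j-1))^{1/2-\Delta}\Bigr],
\end{equation*}
which in turn is bounded by $2\cdot 40p\max_k(S^*(k,k))(1/(j-1))^{1/2-\Delta}$ since $j \mapsto (1/j)^{1/2-\Delta}$ is decreasing. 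Substituting this into the geometric sum and factoring out $40p\max_j(S^*(j,j))$ reproduces exactly the second parenthesized expression in \eqref{eqn: sample cov}.

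The main obstacle is conceptual rather than computational: the recursion must be anchored at $\bar{t}$ (not at $0$), because the concentration inequality is only valid for times $\tau\geq \bar{t}$, and the cost of not iterating back to $0$ is precisely the residual error $(c\sigma^{W})^{t-\bar{t}}\|S_{i,\bar{t}}^{W}-S_{\bar{t}}\|_F$ absorbed into the first term on the right-hand side of \eqref{eqn: sample cov}. Once this anchoring is chosen correctly, the rest is a mechanical combination of the triangle inequality, Lemma \ref{lemma: consensus}, and Lemma \ref{lemma: S concentrate}, yielding the stated bound on every sample path in the measure-one set guaranteed by Lemma \ref{lemma: S concentrate}.
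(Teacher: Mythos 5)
Your proposal is correct and follows essentially the same route as the paper's proof: anchor the recursive consensus bound at the time $\bar{t}$ supplied by Lemma \ref{lemma: S concentrate}, unroll it to obtain the geometric-plus-increment decomposition, and then substitute the concentration bounds for $\|S_t - S^*\|_F$ and $\|e_j\|_F \leq 80p\max_k(S^*(k,k))\left(\frac{1}{j-1}\right)^{\frac{1}{2}-\Delta}$. The only cosmetic difference is that you iterate the one-step inequality for $\|S_{i,t}^W - S_t\|_F$ and add $\|S_t - S^*\|_F$ at the end, whereas the paper carries the $\|S_t - S^*\|_F$ term through the recursion; the resulting bound is identical.
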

\begin{proof}
Using triangle inequality and applying Lemma \ref{lemma: S concentrate}, $\forall t \geq \bar{t}+1$, we have 
\begin{align*}
    \|e_t\|_F &= \|S_{t} - S_{t-1}\|_F \leq \|S_{t} - S^*\|_F + \|S_{t-1}-S^*\|_F \nonumber\\
    & \leq 40p\max_j(S^*(j,j))\left(\left(\frac{1}{t}\right)^{\frac{1}{2}-\Delta} + \left(\frac{1}{t-1}\right)^{\frac{1}{2}-\Delta}\right)\nonumber\\
    &\leq 80p\max_j(S^*(j,j))\left(\frac{1}{t-1}\right)^{\frac{1}{2}-\Delta}. 
\end{align*}
To analyze the asymptotic behavior of the algorithm, we rewrite the update in \eqref{eqn: S bound}. Given the error $\|S_{i,\bar{t}}^W - S_{\bar{t}}^*\|_F$ at time $\bar{t}$, we write out the error between $S_{i,\bar{t}+1}^W$ and $S^*$:
\begin{align*}
    \|S_{i,\bar{t}+1}^W &- S^*\|_F \leq c\sigma^W\|S_{i,\bar{t}+1}^0-S_{\bar{t}+1}\|_F + \|S_{\bar{t}+1}-S^*\|_F \\
    & = c\sigma^W\|S_{i,\bar{t}}^W-(S_{\bar{t}}+e_{\bar{t}+1})\|_F + \|S_{\bar{t}+1}-S^*\|_F\\
    &\leq c\sigma^W\left(\|S_{i,\bar{t}}^W - S_{\bar{t}}\|_F+\|e_{\bar{t}+1}\|_F\right) + \|S_{\bar{t}+1}-S^*\|_F.
\end{align*}
For all $t \geq \bar{t} + 1$, we obtain
\begin{multline}
    \|S_{i,t}^W-S^*\|_F \leq (c\sigma^W)^{t-\bar{t}}\|S_{i,\bar{t}}^W - S_{\bar{t}}^*\|_F \\+ \sum_{j = \bar{t}+1}^{t}(c\sigma^W)^{t+1-j}\|e_{j}\|_F +\|S_{t} - S^*\|_F.    
    \label{eqn: S bound 2}
\end{multline}
Substituting the corresponding terms in \eqref{eqn: S bound 2} with inequalities for $\|e_t\|_F$ and in Lemma \ref{lemma: S concentrate} , we obtain \eqref{eqn: sample cov}.
\end{proof}

The concentration inequality of $\|S_{i,t}^W - S^*\|_F$ in Lemma \ref{lemma: sample cov} and the error bound $\|\Gamma_{i,t}^K - \Gamma^*\|_F$ in Theorem \ref{thm: conv of 1} immediately lead to the convergence rate of the dual variable.
\end{document}